\theoremstyle{plain}
\newtheorem{theorem}{Theorem}[section]
\newtheorem{lemma}[theorem]{Lemma}
\newtheorem{corollary}[theorem]{Corollary}
\theoremstyle{definition}
\theoremstyle{remark}
\def\reals{\mathbb{R}}
\DeclarePairedDelimiter{\norm}{\lVert}{\rVert}
\newtheorem*{theorem*}{Theorem}
\newtheorem*{lemma*}{Lemma}
\def\eqref#1{equation~\ref{#1}}
\def\1{\bm{1}}
\def\eps{{\epsilon}}
\DeclareMathAlphabet{\mathsfit}{\encodingdefault}{\sfdefault}{m}{sl}
\SetMathAlphabet{\mathsfit}{bold}{\encodingdefault}{\sfdefault}{bx}{n}
\DeclareMathOperator*{\argmin}{arg\,min}
\newcommand{\oururl}{Anonymized code is available to the reviewers and will be made publicly available after publication.}
\title{Lottery Tickets with Nonzero Biases}
\newcommand{\printfnsymbol}[1]{%
  \textsuperscript{\@fnsymbol{#1}}%
}
\author{%
Jonas Fischer\thanks{These authors contributed equally to this work.} \\
   Max Planck Institute for Informatics, Saarbrücken, Germany\\
   Saarbrücken, Germany \\
   \texttt{fischer@mpi-inf.mpg.de} \\
   \And
  Advait Gadhikar\printfnsymbol{1} \\
  CISPA Helmholtz Center for Information Security\\
  Saarbrücken, Germany\\
  \texttt{advait.gadhikar@cispa.de} \\
   \And
   Rebekka Burkholz \\
   CISPA Helmholtz Center for Information Security\\
   Saarbrücken, Germany \\
   \texttt{burkholz@cispa.de} \\
   }
\begin{document}

\maketitle

\begin{abstract}
  The strong lottery ticket hypothesis holds the promise that pruning randomly initialized deep neural networks could offer a computationally efficient alternative to deep learning with stochastic gradient descent. 
Common parameter initialization schemes and existence proofs, however, are focused on networks with zero biases, thus foregoing the potential universal approximation property of pruning.
To fill this gap, we extend multiple initialization schemes and existence proofs to nonzero biases, including explicit 'looks-linear' approaches for ReLU activation functions.
These do not only enable truly orthogonal parameter initialization but also reduce potential pruning errors.
In experiments on standard benchmark data, we further highlight the practical benefits of nonzero bias initialization schemes, and present theoretically inspired extensions for state-of-the-art strong lottery ticket pruning.
\end{abstract}

\section{Introduction}\label{sec:introduction}

Challenging tasks across different domains, from protein structure prediction for drug development to detection in complex scenes for self driving cars, have recently been solved through deep neural networks (NNs).
This success, however, is due to heavy overparametrization and comes at the expense of large amounts of computational resources that these models require to be trained and to be deployed.
While training small NNs from scratch commonly fails, the lottery ticket hypothesis (LTH) conjectured by \citet{frankle2019lottery} bears a potential solution.
The LTH states that within a large, randomly initialized NN there exists a well trainable, much smaller subnetwork, or 'ticket', which can be identified by pruning the large NN.
Thus, both training and deployment becomes computationally much cheaper at the expense of the pruning algorithm. 
Even more promising is the conjecture of the existence of 'strong lottery tickets' (SLTs) by \citet{ramanujan2019whats}, which are subnetworks of randomly initialized NNs that do \emph{not} require any further training, as expensive training might become obsolete. 
This SLT hypothesis (SLTH) was later proven for networks without biases \citep{malach2020proving,pensia2020optimal,orseau2020logarithmic}.

Whereas standard intialization schemes -- and hence SLTs -- use zero biases, most successfully trained NN architectures have nonzero biases.
Such bias terms are important to equip NNs with the universal approximation property~\citep{scarselli1998universal}. 
We also prove formally that NNs without biases fail to achieve this in general.
To enable successful training by pruning alone, we generalize common initialization schemes, including the 'looks linear' orthogonal initialization, to nonzero biases.
We show that a signal that computes the output of a function or gradients can propagate through such nonzero bias initialized networks. 
This means they are trainable and gradient-based scoring functions, which are typically used by lottery ticket pruners, are computable.
Moreover, we formally prove the SLTH in this setting and provide empirical evidence that our thoery derives realistic conditions, in which lottery ticket pruning is feasible.

For the discovery of SLTs in practice, \citet{ramanujan2019whats} proposed \texttt{edge-popup}, to the best of our knowledge the only algorithm capable of doing so. 
Their proposal is, however, not suited to recover bias parameters and finds only relatively dense tickets. 
We here extend their approach in multiple ways to recover strong tickets that include bias parameters and that are sparser than the ones obtained by vanilla \texttt{edge-popup}.
In particular, we extend the popup scores to bias terms, and slowly anneal the sparsity of the network to the desired target sparsity.
In addition, our LTH proof conjetures that a rescaling of the parameters is necessary to find SLTs of high sparsity. 
We propose an efficient optimization procedure to find such a good scaling factor in each epoch.

In a synthetic data study, we show that SLTs recovered by \texttt{edge-popup} from NNs with nonzero bias initialization outperform tickets found in networks initialized with zero bias. 
Furthermore, we show that on this data, \texttt{edge-popup}  with rescaling finds much sparser tickets of higher quality.
While these results are encouraging, we discuss that on image data, nonzero biases might play a less prominent role for the current generation of SLT pruning algorithms, which identify subnetworks of suboptimal sparsity with few bias parameters in the recovered ticket~\cite{fischer2022planting}.
We anticipate that our generalization of strong tickets and initialization schemes could pave the way for the next generation of algorithms for the discovery of parameter efficient subnetworks at initialization.

Our contributions are (i) we provide formal evidence that SLTs of zero bias initialized network are not universal function approximators, (ii) we generalize the SLTH to networks with potentially nonzero initial biases, which enables pruning alone to achieve the universal approximation property, (iii) we extend standard initialization schemes to nonzero biases and prove trainability and the existence of SLTs in this setting, and (iv) we enable pruning algorithms for SLTs, to find tickets with biases, which are hence capable of universal approximation.


\subsection{Related work}

The lottery ticket hypothesis \citep{frankle2019lottery} has spurred the development of neural network pruning algorithms that either prune before \citep{grasp,snip,synflow, snipit}, during \citep{frankle2019lottery,srinivas2016gendropout,earlybird,orthoRepair,weightcor,liu2021:finetune,weightelim,LTreg,elasticLTH,sanity,sanity2}, or after training \citep{sigmoidl0,lecun1990optimal,hassibi1992second,dong2017surgeon,li2017pruneconv,molchanov2017pruneinf,validateManifold}.
Their main objective is to identify a subnetwork of a randomly initialized neural network that can be trained to achieve a similar performance as the trained full network. 
\citet{supermask} and \citet{ramanujan2019whats} postulated an even stronger hypothesis, as they realized that the randomly initialized neural network contains so called 'strong' lottery tickets that do not require further training after pruning.
\citet{malach2020proving,pensia2020optimal,orseau2020logarithmic} proved their existence by deriving realistic lower bounds on the width of the original randomly initialized neural network that contains a lottery ticket with a given probability.

However, the proposed pruning algorithm for SLTs, i.e., edge-popup \citep{ramanujan2019whats}, as well as the existence proofs only handle neural network architectures with zero biases.
A reason might be that very large neural networks can compensate for missing biases in the studied application, i.e. image classification. 
Yet, this compensation usually require a higher number of weight parameters but state-of-the-art algorithms and proofs for SLTs do not cover highly sparse tickets \citep{fischer2022planting}.
Another reason might be that most neural network initialization schemes propose zero biases. 
Exceptions include a data dependent choice of biases \citep{batchMFT} and a random scheme that does not try to prevent exploding or vanishing gradients in deep neural networks \citep{complLinReg}.
The recent trend in the search for weak lottery tickets towards rewinding parameters to values obtained early during training \citep{rewind} also results in lottery ticket initialization with nonzero biases.
None of these nonzero bias initialization schemes are designed in support of the existence of SLTs.
We fill this gap with this work.

\subsection{Notation}
Let $f(x)$ denote a bounded function, without loss of generality $f: {[-1,1]}^{n_0} \rightarrow {[-1,1]}^{n_L}$, that is parameterized as a deep neural network with architecture  $\bar{n} = [n_0, n_1, ..., n_L]$, i.e., depth $L$ and widths $n_l$ for layers $l = 0, ..., L$ with ReLU activation function $\phi(x):= \max(x,0)$.
It maps an input vector $\bm{x}^{(0)}$ to neurons $x^{(l)}_i$ as:
\begin{align*}\label{eq:DNN}
 \bm{x}^{(l)} = \phi\left(\bm{h}^{(l)} \right), \qquad
 \bm{h}^{(l)} = \bm{W}^{(l)} \bm{x}^{(l-1)} + \bm{b}^{(l)}, \qquad
 \bm{W}^{(l)} \in \mathbb{R}^{n_{l} \times n_{l-1}},
 \qquad\bm{b}^{(l)} \in \mathbb{R}^{n_l} 
\end{align*}
where $\bm{h}^{(l)}$ is the pre-activation, $\bm{W}^{(l)}$ is the weight matrix, and $\bm{b}^{(l)}$ is the bias vector of layer $l$.
For convenience, the parameters of the network are subsumed in a vector $\bm{\theta} := \left(\left(\bm{W}^{(l)}, \bm{b}^{(l)}\right)\right)^{L}_{l=1}$.
We also write $f(x \mid \bm{\theta})$ to emphasize the dependence of $f$ on its parameters $\bm{\theta}$.

The supremum norm of any function $g$ is defined with respect to the same domain $||g||_{\infty} := \sup_{x \in {[-1,1]}^{n_0}} ||g||_2$. 

Assume furthermore that a ticket $f_{\epsilon}$ can be obtained by pruning a large mother network $f_0$, which we indicate by writing $f_{\epsilon} \subset f_{0}$. 
The sparsity level $\rho$ of $f_{\epsilon}$ is then defined as the fraction of nonzero weights that remain after pruning, i.e., $\rho = \left(\sum_l ||\bm{W}^{(l)}_{\epsilon}||_0 \right)/\left(\sum_l  ||\bm{W}^{(l)}_{0}||_0 \right)$, where $||\cdot||_{0}$ denotes the $l_0$-norm, which counts the number of nonzero elements in a vector or matrix.
Another important quantity that influences the existence probability of lottery tickets is the in-degree of a node $i$ in layer $l$ of the target $f$, which we define as the number of nonzero connections of a neuron to the previous layer plus $1$ if the bias is nonzero, i.e., $k^{(l)}_i := ||\bm{W}^{(l)}_{i,:}||_{0} + ||b^{(l)}_{i}||_{0}$, where $\bm{W}^{(l)}_{i,:}$ is the $i$-th row of $\bm{W}^{(l)}$.
The maximum degree of all neurons in layer $l$ is denoted as $k_{l,\text{max}}$.
In the formulation of theorems, we make use of the universal constant $C$ that can attain different values. 

\section{Motivation: Neural Networks with Zero Biases are Not Universal Function Approximators}

\begin{wrapfigure}[18]{r}{5.5cm}
    \centering
    \includegraphics[width=5cm]{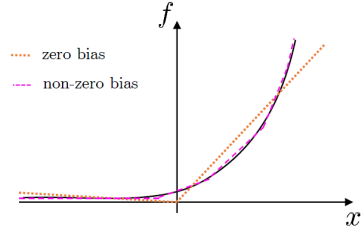}
    \caption{\textit{Function approximations.} Depicted are approximations of an exponential function (solid black) by sparse NNs with nonzero bias (dashed pink), which are piecewise linear functions, and zero bias (dotted orange), which are two linear pieces through the origin, which exemplify Lem.~\ref{lem:networkfact}.}
    \label{fig:lemma_exp}
\end{wrapfigure}

Why do we need to initialize nonzero biases?
With nonzero biases, neural networks have the universal approximation property~\citep{scarselli1998universal}. 
Thus, for a given $\epsilon > 0$ and arbitrary continuous function $g$, a large enough neural network can approximate $g$ up to error $\epsilon$. 
Standard neural networks and also weak lottery tickets are able to learn nonzero biases even from zero initialization. However, this is not the case with strong lottery tickets. Strong lottery tickets rely on pruning alone to obtain a model with high performance. Nonzero biases need to be available from the initialization for them to be universal approximators with ReLU activations.
The following Lemma captures the problem with zero bias networks, which is a factorization of univariate ReLU networks of \textit{arbitrary depth and width}.

\begin{lemma}\label{lem:networkfact}
  Univariate neural networks with ReLU activations  $f:\reals\rightarrow\reals^{n_L}$ of arbitrary depth $L$ and layer widths $n_1,\ldots,n_L$, and without biases, represent a function $f(x) = \bm{W}_{+} \phi(x) + \bm{W}_{-} \phi(-x), \bm{W}_{+},\bm{W}_{-}\in\reals^{n_L \times 1}$.
\end{lemma}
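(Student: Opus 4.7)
The plan is to prove the claim by induction on the layer index $l$, showing that for a univariate input every neuron in layer $l$ of a biasless ReLU network computes a function of the form $\alpha\,\phi(x) + \beta\,\phi(-x)$ for some scalars $\alpha,\beta$. Since the output $f(x)$ is a linear combination of the neurons in the last layer (and the linear combination of such functions is again of this form), the lemma follows immediately from the inductive claim.

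For the base case $l=1$, each first-layer pre-activation is $h^{(1)}_i = w_i x$ for a scalar $w_i = W^{(1)}_{i,1}$. Applying the ReLU gives $\phi(w_i x) = \max(w_i,0)\,\phi(x) + \max(-w_i,0)\,\phi(-x)$, by case-splitting on the sign of $w_i$, which already has the desired shape. For the inductive step, assume every neuron $j$ in layer $l-1$ satisfies $x^{(l-1)}_j(x) = \alpha_j\,\phi(x) + \beta_j\,\phi(-x)$. Then the pre-activation of neuron $i$ in layer $l$ is a linear combination of such functions, so it equals $A_i \phi(x) + B_i \phi(-x)$ with $A_i = \sum_j W^{(l)}_{ij}\alpha_j$ and $B_i = \sum_j W^{(l)}_{ij}\beta_j$.

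The key observation is that $\phi(x)$ and $\phi(-x)$ have disjoint supports (their product is identically zero). Consequently, for $x\geq 0$ the pre-activation equals $A_i x$ and for $x\leq 0$ it equals $-B_i x$. Applying ReLU therefore yields
\begin{equation*}
\phi\bigl(A_i\phi(x) + B_i\phi(-x)\bigr) \;=\; \max(A_i,0)\,\phi(x) + \max(B_i,0)\,\phi(-x),
\end{equation*}
which again has the required form, completing the induction.

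The only subtlety is this disjoint-support step: one might naively fear that the nonlinearity mixes $\phi(x)$ and $\phi(-x)$ in some non-representable way, but because at every point $x$ at most one of the two summands is nonzero the ReLU can be pushed inside each branch independently. Other than this, the argument is a clean induction on depth, and the statement about arbitrary widths comes for free because the inductive hypothesis is preserved regardless of the number of units per layer.
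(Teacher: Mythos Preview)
Your proof is correct and follows essentially the same approach as the paper: induction on depth, with the key step being the disjoint-support observation that allows the ReLU to be applied separately on $x\geq 0$ and $x\leq 0$. Your presentation is in fact somewhat tidier, giving the explicit update $\max(A_i,0)\,\phi(x)+\max(B_i,0)\,\phi(-x)$ where the paper arrives at the same conclusion via a case split on the sign of $x$ and the signs of the incoming coefficients.
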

This factorization shows that ReLU networks without biases are greatly limited in terms of the functions they can express, which we visualize for an example in Fig.~\ref{fig:lemma_exp}.
From this Lemma we derive the following insight about neural networks without biases -- and hence strong lottery tickets -- which forms the motivation of our work.

\begin{corollary}\label{thm:nonUniversalApprox}
Neural networks with ReLU activations and without biases are not universal function approximators. 
\end{corollary}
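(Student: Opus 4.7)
The plan is to leverage the rigid structural constraint on zero-bias ReLU networks exposed by Lem.~\ref{lem:networkfact}, together with the single direct observation that such networks must vanish at the origin, to produce an explicit continuous target that cannot be approximated in the supremum norm.

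First, I would establish that for any zero-bias ReLU network $f(x \mid \bm{\theta})$ on $[-1,1]^{n_0}$ with $\bm{b}^{(l)} = 0$ for all $l$, one has $f(0) = 0$. A short induction on layers does the job: the first pre-activation equals $\bm{W}^{(1)} \cdot 0 = 0$, so $\bm{x}^{(1)} = \phi(0) = 0$, and the same reasoning propagates through every subsequent layer. In the univariate case this is already implicit in Lem.~\ref{lem:networkfact}, which additionally pins down such a network to a piecewise-linear function with exactly two pieces joined at the origin.

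Next, I would exhibit a continuous target that this property alone rules out. The simplest witness is the constant function $g(x) = c$ for any fixed $c \in (0,1]$. For every zero-bias network $f$, the supremum norm satisfies $\|f - g\|_\infty \geq |f(0) - g(0)| = |c| > 0$. Hence no sequence of such networks can approximate $g$ to accuracy better than $|c|$, directly contradicting the definition of universal function approximator. One could equally well take any continuous $g$ with $g(0)\neq 0$, or more generally any $g$ that fails positive homogeneity of degree one (which zero-bias ReLU networks inherit from $\phi$), e.g.\ $g(x) = \|x\|_2^{2}$.

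The main obstacle is essentially cosmetic: Lem.~\ref{lem:networkfact} is stated only in the univariate regime, whereas the corollary concerns arbitrary input dimension $n_0$. This is dispatched by the short induction above, which uses only $\phi(0) = 0$ and is dimension-agnostic; alternatively one could reduce to the univariate case by restricting the network along any ray through the origin and then invoking Lem.~\ref{lem:networkfact} directly. Either route yields the contradiction in a few lines.
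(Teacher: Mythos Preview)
Your proposal is correct and uses the same counterexample as the paper (a nonzero constant function), but the route to the contradiction is genuinely different. The paper invokes Lem.~\ref{lem:networkfact} to write any univariate zero-bias network as $f(x)=w_+\phi(x)+w_-\phi(-x)$, then \emph{optimizes} the two free parameters against $g(x)=0.5$ in mean squared error, computing $\min_{w_+,w_-}\int_{-1}^{1}(g-f)^2\,dx = 1/8$ and concluding that no approximation below that threshold is possible. You instead bypass both the explicit two-parameter form and the optimization step: the single observation $f(0)=0$, obtained by a one-line induction using only $\phi(0)=0$, already forces $\|f-g\|_\infty \geq |g(0)|$ in the supremum norm. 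Your argument is more elementary (no calculus, no minimization), works natively in arbitrary input dimension $n_0$ rather than only the univariate setting of Lem.~\ref{lem:networkfact}, and targets the sup norm directly, which is the standard metric for universal approximation. What the paper's approach buys in exchange is a quantitative picture of the \emph{best} two-piece linear fit and its integrated error, which feeds the illustration in Fig.~\ref{fig:lemma_exp}; your pointwise argument at the origin gives a cleaner bound ($\|f-g\|_\infty \geq 0.5$ versus the paper's MSE $\geq 1/8$) but no information about the optimal approximant.
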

\begin{proof}
We prove by contradiction.
Assume that such a network is a universal function approximator, i.e. for any function $g$ and any $\epsilon$ there exists a network that can approximate $g$ up to error $\epsilon$.

Let us try to approximate the constant function $g(x) = 0.5$ on the domain $[-1,1]$ with a neural network without biases.
From Lemma \ref{lem:networkfact}, we know that an univariate ReLU network without biases represents a function $f(x) = w_{+} \phi(x) + w_{-} \phi(-x)$ with two parameters $w_{+}, w_{-} \in \mathbb{R}$.
The minimum mean squared error with respect to $g(x)$ that $f$ can achieve is $\int^1_{-1} (g(x) - f(x))^2 dx = 1/8$ for $w_{+} = 3/4$ and $w_{-} = 3/4$ (see Appendix \ref{app:counterex_universal}).
Thus for any $\epsilon < 1/8$, $f(x)$ fails to approximate $g(x)$ up to error $\epsilon$, which is a contradiction.
Note that a ReLU network with nonzero biases can represent the function $g(x) = 0.5$ perfectly.
For instance, a network of depth $L=1$ with one neuron in the intermediary layer is sufficient, as $0.5 = \phi(0.5)$.

Although $g(x)=0.5$ is an obviously simple function, it already serves as a counterexample and hence suffices to prove the theorem.
Networks without a bias need to explicitly model an argument independent offset -- or bias, which is a hard task.
It is easy to see from Lemma \ref{lem:networkfact} that networks with zero bias usually fail to guarantee universal approximation for univariate non-linear functions. We provide $g(x)=e^x$ as an example in Appendix \ref{app:counterex_universal}.
\end{proof}
This theorem provides the theoretical motivation for why we need nonzero bias initializations for well performing SLTs, and why it is necessary to include bias terms when pruning for SLTs.


\section{Initialization of Nonzero Biases}
Common initialization schemes (e.g. \cite{HeInit,GlorotInit,dyniso}) set all biases to zero, while network weights are drawn randomly to obtain parameter diversity. 
Proofs of the strong lottery ticket hypothesis have focused on this setting, thereby foregoing the universal approximation property of deep neural networks \citep{scarselli1998universal}, since pruning alone can only recover the zero-initialized biases.

Here, we propose the initialization of nonzero biases, which then become subject to pruning in addition to the network weights.
How should these biases be initialized?
A good approach has to fulfill two essential criteria. 
a) The randomly initialized neural network needs to be trainable by SGD. 
This property is also critical for most pruning algorithms, as they are inspired by SGD and define pruning scores based on gradients.
b) The randomly initialized neural network should contain lottery tickets with high probability.

Before we can answer how to initialize biases, we first have to face a different issue pertaining strong lottery tickets.
Standard initialization approaches, like He \citep{HeInit} or Glorot \citep{GlorotInit} initialization, achieve trainability by ensuring that the output of a deep neural network is contained within a reasonable range, thus rendering the computations of gradients numerically feasible. 
Network weights are commonly initialized according to a distribution with variance $\sigma^2$ that is inverse proportional to the number of neurons $n$ in a layer, $\sigma^2 \propto 1/n$. 
In consequence, after pruning a high percentage of these weights, the network output is heavily down-scaled, which needs to be compensated by up-scaling the output, as also discovered experimentally by \cite{ramanujan2019whats} and mentioned by \cite{malach2020proving}.

\subsection{Output Scaling}\label{sec:outscaling}
For ReLU networks with zero biases, the appropriate output scaling after or during pruning is straight forward to compute, as networks of depth $L$ are $L$-homogeneous in the network parameters. 
Multiplying each parameter with the same scalar $\sigma$ leads to a scaling factor of $\sigma^L$:  $f(x \mid\sigma \theta) = \sigma^L f(x\mid\theta)$. 
This holds no longer true for nonzero biases. 
The following observation helps us to develop a notion that is similar to homogeneity for networks with nonzero biases. 
\begin{lemma}\label{thm:scaling}
Let  $h\left(\bm{\theta_0}, \bm{\sigma}\right)$ denote a transformation of the parameters $\bm{\theta_0}$ of the deep neural network $f_0$, where each weight is multiplied by a scalar $\sigma_l$, i.e., $h^{(l)}_{ij}(w^{(l)}_{0,ij}) = \sigma_l w^{(l)}_{0,ij}$, and each bias is transformed to $h^{(l)}_{i} (b^{(l)}_{0,i}) = \prod^l_{m=1}\sigma_m b^{(l)}_{0,i}$.
Then, we have 
$f\left(x \mid h(\bm{\theta_0}, \bm{\sigma})\right) = \prod^L_{l=1} \sigma_l f(x \mid \bm{\theta_0})$.
\end{lemma}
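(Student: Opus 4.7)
The plan is to prove this by induction on the layer index $l$, showing the stronger claim that for every hidden layer $l=1,\ldots,L-1$ the transformed activations satisfy $\bm{x}'^{(l)} = \left(\prod_{m=1}^{l}\sigma_m\right)\bm{x}^{(l)}$, and hence the output layer obeys the stated identity. The key enabling property will be the positive homogeneity of ReLU, $\phi(\alpha z)=\alpha\,\phi(z)$ for $\alpha\geq 0$, which lets the accumulated scaling factor pass through the nonlinearity. I will assume $\sigma_l \geq 0$ (as in the applications of this lemma for output rescaling after pruning); otherwise one would instead track the absolute value or restrict to partial products with a fixed sign.

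First, I would set up the base case. By definition of $h$, the pre-activation of the transformed network at layer $1$ is
\begin{equation*}
\bm{h}'^{(1)} = (\sigma_1 \bm{W}_0^{(1)})\bm{x}^{(0)} + \sigma_1 \bm{b}_0^{(1)} = \sigma_1 \bm{h}^{(1)},
\end{equation*}
and therefore $\bm{x}'^{(1)} = \phi(\sigma_1 \bm{h}^{(1)}) = \sigma_1 \phi(\bm{h}^{(1)}) = \sigma_1 \bm{x}^{(1)}$ by positive homogeneity of $\phi$.

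Next, I would perform the inductive step. Assuming $\bm{x}'^{(l-1)} = \left(\prod_{m=1}^{l-1}\sigma_m\right)\bm{x}^{(l-1)}$, I compute
\begin{equation*}
\bm{h}'^{(l)} = (\sigma_l \bm{W}_0^{(l)})\bm{x}'^{(l-1)} + \Bigl(\prod_{m=1}^{l}\sigma_m\Bigr)\bm{b}_0^{(l)} = \Bigl(\prod_{m=1}^{l}\sigma_m\Bigr)\bigl(\bm{W}_0^{(l)}\bm{x}^{(l-1)} + \bm{b}_0^{(l)}\bigr) = \Bigl(\prod_{m=1}^{l}\sigma_m\Bigr)\bm{h}^{(l)}.
\end{equation*}
Here the crucial observation is that the bias is scaled by exactly the accumulated product $\prod_{m=1}^{l}\sigma_m$, which is precisely what compensates the extra factor picked up by the activations from layer $l-1$, so both summands share the same scalar and can be factored out jointly. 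Applying ReLU and again using positive homogeneity gives $\bm{x}'^{(l)} = \left(\prod_{m=1}^{l}\sigma_m\right)\bm{x}^{(l)}$.

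Finally, I would apply the induction hypothesis at layer $L$ and use that the network output is read off the (linear) output transform $\bm{h}^{(L)}$, yielding $f(x\mid h(\bm{\theta_0},\bm{\sigma})) = \bm{h}'^{(L)} = \prod_{l=1}^{L}\sigma_l\, f(x\mid\bm{\theta_0})$. The only real obstacle is the sign issue: positive homogeneity of ReLU fails for negative scalars, so I would explicitly note the assumption $\sigma_l\geq 0$ (consistent with how this lemma is later used to absorb a positive rescaling of the pruned network). Everything else is bookkeeping that the bias term at layer $l$ must be scaled by the \emph{full} product $\prod_{m=1}^{l}\sigma_m$, not just by $\sigma_l$, in order for the per-layer scalars to commute with the affine maps.
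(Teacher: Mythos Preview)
Your proof is correct and follows essentially the same route as the paper: induction over the layers, using the (positive) homogeneity of the ReLU to pull the accumulated product $\prod_{m=1}^{l}\sigma_m$ through $\phi$ at each step. The paper's version is slightly terser and treats the output layer uniformly by allowing $\phi$ to be either ReLU or the identity; your explicit remark that the argument requires $\sigma_l\geq 0$ is a point the paper glosses over with the word ``homogeneity'' but is indeed needed for ReLU.
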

Lemma~\ref{thm:scaling} suggests that if we scale each weight by a factor $\sigma_{w,l}$, scaling the corresponding biases by a factor $\sigma_{b,l} = \prod^l_{m=1}\sigma_{w,m}$ would result in the same network $f$ without scaling of parameters. 
We only have to correct the output by dividing it with a factor $\prod^L_{l=1} \sigma_{w,l}$. 
From this observation, we directly derive our initialization proposal, as it suggests an equivalence (irrespective of scaling) between initialising parameters in $\theta_i \in U[0,1]$ and our more realistic setting. 
Concretely, we propose to replace $b^{(l)}_{i} = 0$ by $b^{(l)}_{i} \sim U{([-\prod^l_{k=1}\sigma_{w,k}, \prod^l_{k=1}\sigma_{w,k}])}$ or $b^{(l)}_{i} \sim \mathcal{N}\left(0, \prod^l_{m=1}\sigma_{w,m}\right)$, respectively, when the weights are $w^{(l)}_{ij} \sim  U\left([-\sigma_{w,l}, \sigma_{w,l}]\right)$  or $w^{(l)}_{ij} \sim  \mathcal{N}\left(0, \sigma_{w,l} \right)$.

As a general remark, note that the scaling factor of the output $\prod^L_{m=1}\sigma_{w,m}$ quickly approaches zero for increasing depth, which could render one-shot pruning numerically infeasible. 
For that reason, we propose a computationally cheap rescaling procedure that allows us to find significantly sparser strong lottery tickets.
This is also helpful for maintaining the trainability of the pruned network \citep{rescaleInit}, which we discuss next in the context of initializations with nonzero biases.

\subsection{Signal Preservation}
The question remains whether input and gradient signals can still propagate through the large original network with such an initialization.
A common criterion to prevent initial vanishing or exploding gradients, in particular in mean field analyses \citet{meanfield}, is to ensure that the squared signal norm of the input can propagate through the initial network. 
To bound the second moment of the squared output, we generalize Cor.~3 for normal distributions in \citet{dyniso} to symmetric weight and bias distributions.
\begin{lemma}\label{thm:train}
Assume that the weights and biases of a fully-connected deep neural network $f$ are drawn independently from distributions that are symmetric around the origin $0$ with variances $\sigma^{2}_{w,l}$ or $\sigma^{2}_{b,l}$, respectively. Then, for every input $\bm{x_0}$, the second moment of the output is
\begin{equation}\label{eq:Enorm} 
\begin{split}
 \mathbb{E}\left(||\bm{f(x_0)}||^2_2 \right) = & ||\mathbf{ x}^{(0)}||^2_2  \left(\prod^L_{l=1}\frac{n_l \sigma^{2}_{w,l}}{2}\right)  + \sigma^{2}_{b,L} \frac{n_L}{2} 
 +   \left(\sum^{L-1}_{l=1} \sigma^{2}_{b,l} \frac{n_l}{2} \left(\prod^{L}_{k=l+1}\frac{n_k\sigma^{2}_{w,k}}{2}\right)\right).
\end{split}
\end{equation}
\end{lemma}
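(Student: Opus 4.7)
My approach is induction on the depth $L$, via a one-step recurrence for $S_l := \mathbb{E}(\|\bm{x}^{(l)}\|_2^2)$. The key structural fact I will exploit is that each pre-activation $h_i^{(l)}$, conditional on $\bm{x}^{(l-1)}$, is a linear combination of independent random variables that are symmetric around zero---namely the $W_{ij}^{(l)}$ (scaled by the constants $x_j^{(l-1)}$) together with $b_i^{(l)}$---and is therefore itself symmetric. Since a symmetric random variable $h$ satisfies $\mathbb{E}[\phi(h)^2] = \tfrac{1}{2}\mathbb{E}[h^2]$, this identity then applies coordinate-wise at every layer, including the last, which accounts for the factor $n_L/2$ in front of $\sigma_{b,L}^2$.

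First I would compute $\mathbb{E}[(h_i^{(l)})^2 \mid \bm{x}^{(l-1)}]$. Expanding the square and using the mutual independence of the $W_{ij}^{(l)}$ and $b_i^{(l)}$ together with their zero means kills every cross term, leaving $\sigma_{w,l}^2 \|\bm{x}^{(l-1)}\|_2^2 + \sigma_{b,l}^2$. Applying the ReLU symmetry identity, summing over $i=1,\dots,n_l$, and taking total expectation yields the linear recurrence
\[
S_l \;=\; \frac{n_l \sigma_{w,l}^2}{2}\, S_{l-1} \;+\; \frac{n_l \sigma_{b,l}^2}{2}, \qquad S_0 \;=\; \|\bm{x}^{(0)}\|_2^2.
\]

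Second, I unroll this recurrence. Writing $\alpha_l := n_l \sigma_{w,l}^2 / 2$ and $\beta_l := n_l \sigma_{b,l}^2 / 2$, the standard closed form of a first-order linear recurrence gives
\[
S_L \;=\; \Bigl(\prod_{l=1}^L \alpha_l\Bigr)\|\bm{x}^{(0)}\|_2^2 \;+\; \beta_L \;+\; \sum_{l=1}^{L-1} \beta_l \prod_{k=l+1}^L \alpha_k,
\]
which reproduces the claimed formula after substituting back the definitions of $\alpha_l$ and $\beta_l$. This identification can also be verified directly by a one-line induction on $L$.

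The step I expect to be most delicate is obtaining the ReLU identity rigorously. The hypothesis supplies only symmetry and finite variance of the parameter distributions, but the identity $\mathbb{E}[\phi(h)^2] = \tfrac{1}{2}\mathbb{E}[h^2]$ is a distributional statement that requires $h$ itself to be symmetric---which is not automatic when $h$ depends on earlier layers whose outputs need not be symmetric (ReLU is definitely not sign-symmetric). The clean way around this is to condition on the $\sigma$-algebra generated by the parameters of layers $1,\dots,l-1$: under this conditioning $\bm{x}^{(l-1)}$ is deterministic, while $\bm{W}_{i,:}^{(l)}$ and $b_i^{(l)}$ remain symmetric and independent of it, so the conditional law of $h_i^{(l)}$ is symmetric and the identity applies. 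A short appeal to the tower property then removes the conditioning, and the rest of the argument reduces to routine second-moment bookkeeping and unrolling the recurrence.
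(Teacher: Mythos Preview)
Your proposal is correct and follows essentially the same route as the paper: condition on the previous layer so that the pre-activation is a symmetric zero-mean variable with variance $\sigma_{w,l}^2\|\bm{x}^{(l-1)}\|_2^2+\sigma_{b,l}^2$, apply $\mathbb{E}[\phi(h)^2]=\tfrac12\mathbb{E}[h^2]$, sum over neurons, and then unroll via the tower property. Your explicit discussion of why conditioning on the earlier layers' parameters is needed to recover symmetry of $h_i^{(l)}$ is in fact more carefully stated than the paper's version of the same step.
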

For $\sigma^{2}_{w,l} \approx 2/n_l$ (as usually realized by He initialization \cite{HeInit}) and our choice $\sigma_{b,l} = \prod^l_{m=1}\sigma_{w,m}$, this implies that $\mathbb{E}\left(||\bm{f(x_0)}||^2_2 \right) \approx ||\mathbf{ x}^{(0)}||^2_2 + 1$, which prevents initial signal and gradient explosions even for high depth $L$.

\paragraph{'Looks linear` and orthogonal initialization}
The above lemma assumes that the weights and biases are drawn independently at random. 
This does not hold for orthogonal weight initialization, whose benefits have been highlighted in numerous works in general \citep{dynIsometry,spectralUniversality,orthonormalInit} and in particular for lottery ticket pruning \citep{orthoRepair}.
The marginal distribution of each weight entry is still normally distributed as $w^{(l)}_{ij} \sim \mathcal{N}\left(0, 1/n_l\right)$ so that our lottery ticket existence proof (in Section \ref{sec:theorems}) still applies approximately to this setting.
However, the main advantage of orthogonal weight initialization for trainability is usually induced by (approximate) dynamical isometry.
For ReLU activation functions, this is not achievable simply by initializing the whole matrix $W^{(l)}$ as orthogonal \citep{dynIsometry,spectralUniversality,dyniso}.
The solution is in fact based on the same insight that enables all current lottery ticket existence proofs for ReLUs, where the identity can be represented by $x = \phi(x) - \phi(-x)$.

As dynamical isometry can be achieved by a Jacobian that is similar to the identity, \citet{dyniso,orthoCNN} could ensure perfect dynamical isometry for ReLUs by a 'looks linear` initialization of the weight matrix and zero biases so that the full signal is always preserved at initialization.
Effectively, each neural network layer computes $\tilde{\bm{x}}^{(l)} = \phi\left(\bm{W}^{(l)}_0 \tilde{\bm{x}}^{(l-1)}\right) - \phi\left(-\bm{W}^{(l)}_0 \tilde{\bm{x}}^{(l-1)}\right)$, where the matrix $\bm{W}^{(l)}_0 \in \mathbb{R}^{n_l/2} \times \mathbb{R}^{n_{l-1}/2}$ is orthogonal. 
Extending this idea by nonzero biases corresponds, effectively, to $\tilde{\bm{x}}^{(l)} = \phi\left(\bm{W}^{(l)}_0 \tilde{\bm{x}}^{(l-1)} + \bm{b}_l\right) - \phi\left(-\bm{W}^{(l)}_0 \tilde{\bm{x}}^{(l-1)}- \bm{b}_l\right)$, where the matrix $\bm{W}^{(l)}_0 \in \mathbb{R}^{n_l/2} \times \mathbb{R}^{n_{l-1}/2}$.
Concretely, we define 
\begin{align}
 & \bm{W}^{(l)}  =   \left[ {\begin{array}{cc}
  \bm{W}^{(l)}_{0}  & -\bm{W}^{(l)}_{0} \\
   -\bm{W}^{(l)}_{0} & \bm{W}^{(l)}_{0} \\
  \end{array} } \right],\qquad
   \bm{b}^{(l)} =  \left[ {\begin{array}{cc} \bm{b}^{(l)}_{0}  & -\bm{b}^{(l)}_{0} \end{array} } \right]
\end{align}
for orthogonal, nonzero bias initialization with $\bm{b}^{(l)}_{0} \sim \mathcal{N}\left(0, \sigma^2_{b,l} I\right)$ independently from the weights.
Note that each entry of the weight matrix is again distributed as $w^{(l)}_{ij} \sim \mathcal{N}\left(0, 2/n_l\right)$ as in case of He initialization.
How should we choose the variance $\sigma^2_{b,l}$ of the biases?
Similarly to Lemma~\ref{thm:train}, we can derive the variance of the output signal as
\begin{align}\label{eq:orthovar} 
\begin{split}
 \mathbb{E}\left(||\bm{f(x_0)}||^2_2 \right) = & ||\mathbf{ x}^{(0)}||^2_2  +   \sum^{L}_{l=1} \sigma^{2}_{b,l}  \frac{n_l}{2}.
\end{split}
\end{align}
Initially, the additional $\sum^{L}_{l=1} \sigma^{2}_{b,l}  \frac{n_l}{2}$ is easier to control than in Lemma~\ref{thm:train}, exactly because the weights do not scale the biases randomly.
Note that we could improve this further and initialize the bias also dependent on the weights and make them cancel out to achieve again perfect dynamical isometry at initialization and $\mathbb{E}\left(||\bm{f(x_0)}||^2_2 \right) = ||\mathbf{ x}^{(0)}||^2_2 $.
However, this would depend on a carefully chosen dependence between weights and biases that gets destroyed during training and/or pruning.
For that reason, we still assume a situation similar to Lemma~\ref{thm:train} and initialize $\sigma_{b,l} = \prod^l_{m=1}\sigma_{w,m}$.
This case is also supported (at least approximately) by our lottery ticket existence proof and respects the scaling of parameters as outlined in Lemma~\ref{thm:scaling}.
With the trainability of randomly initialized networks, we have fulfilled the first criterion of a good initialization proposal for nonzero biases. 
The second criterion, the existence of lottery tickets, is discussed in the next section.

\section{Existence of Lottery Tickets with Nonzero Biases}\label{sec:theorems}

\begin{figure*}
    \centering
    \begin{subfigure}[t]{0.17\textwidth}
    \centering
    \includegraphics[width=\textwidth]{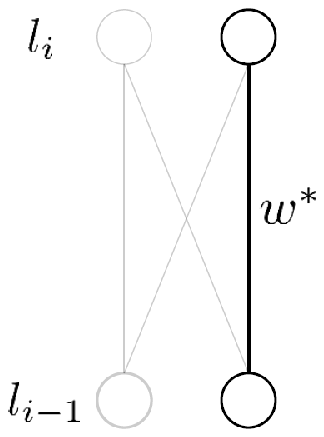}
    \caption{Target network $f$}
    \end{subfigure}
    \hfill
    \begin{subfigure}[t]{0.43\textwidth}
    \centering
    \includegraphics[width=\textwidth]{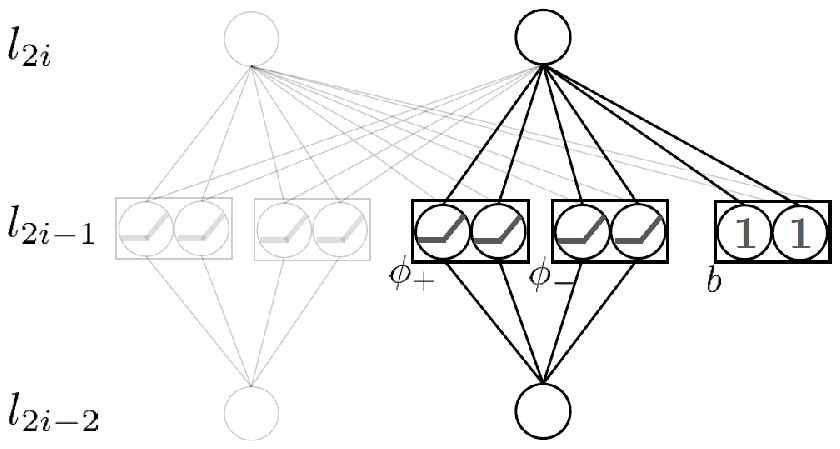}
    \caption{Mother network $f_0$}
    \end{subfigure}
    \hfill
    \begin{subfigure}[t]{0.38\textwidth}
    \centering
    \includegraphics[width=\textwidth]{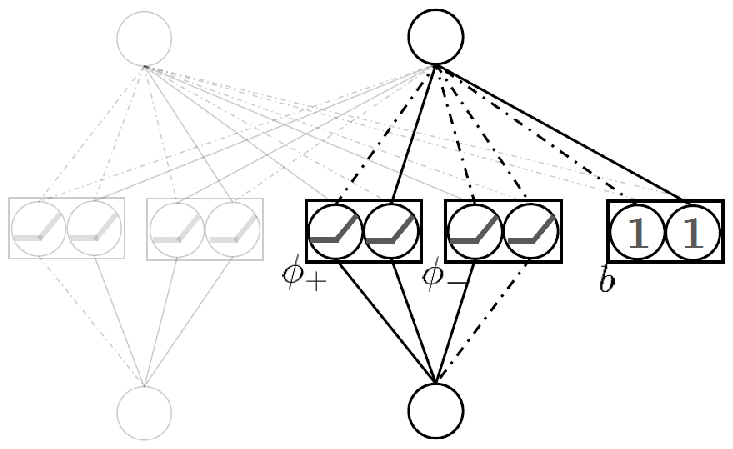}
    \caption{Strong lottery ticket $f_{\epsilon}$}
    \end{subfigure}
    \caption{\textit{Existence proof network construction.} \textbf{(a)} Two subsequent layers $l_i, l_{i-1}$ of the target network $f$ of depth $L$. \textbf{(b)} A mother network $f_0$ of depth $2L$, where intermediate layers with degree-one nodes are squeezed in between any pair of subsequent layers. The size of this layer is determined by the reduction on the subset sum problem. Bold neurons and connections are used to approximate $w^*$ of the target $f$ and the bias of the neurons in $l_i$ of $f$. \textbf{(c)} Lottery ticket $f_{\epsilon}$ obtained from the mother network by pruning connections (dashed lines).}
    \label{fig:proof_construction}
\end{figure*}

Proofs of the existence of lottery tickets have derived sufficient conditions under which pruning algorithms should have a good chance to find a winning ticket.

The first proof of the strong lottery ticket hypothesis \citep{malach2020proving} has shown that a weight-bounded deep neural target network of depth $L$ and width $n$ with ReLU activation functions is contained up to error $\epsilon$ with high probability in a larger deep neural network of double the depth of the target network, i.e., $2L$.
Their strong requirement on the width of the large network to be of polynomial order $O(n^5 L^2/\epsilon^2)$ or, under additional sparsity assumptions, $O(n^2 L^2/\epsilon^2)$, was subsequently improved to a logarithmic dependency of the form $O(n^2 \log(nL)/\epsilon)$ for weights that follow an unusual hyperbolic distribution \citep{orseau2020logarithmic} and $O(n \log(n L/\epsilon))$ for uniformly distributed weights \citep{pensia2020optimal}.
The improvement is achieved by the insight that many different parametrizations exist that can compute almost the same function as the target network.
All proofs have in common that two neural network layers are needed to approximate a neuron $\phi(\bm{w^T x})$, which explains the $2L$ depth requirement.

\subsection{SLTH with Nonzero Bias Initializations}

So far, existing works assume that the target network has zero biases.
This limits significantly the class of functions that we can hope to learn by pruning alone.
To extend the existence proofs, the first question that we have to answer is: 
How does the error propagate through a network with nonzero biases? 
Similar to Lem.~1 in \citet{orseau2020logarithmic}, we can deduce from the answer of how close each parameter $\theta_{\epsilon}$ needs to be to the target one how to guarantee an $\epsilon$ approximation of the entire network.
\begin{lemma}[Approximation propagation]\label{thm:approx}
Assume $\epsilon > 0$ and let the target network $f$ and its approximation $f_{\epsilon}$ have the same architecture.
If every parameter $\theta$ of $f$ and corresponding $\theta_{\epsilon}$ of $f_{\epsilon}$ in layer $l$ fulfils $|\theta_{\epsilon} - \theta| \leq \epsilon_{l}$ for 

\begin{multline}\label{eq:epsTheta}
\epsilon_l := \epsilon \left(L \sqrt{n_l k_{l,\text{max}}} \left(1+  \sup_{x \in {[-1,1]}^{n_0}} \norm{\bm{x}^{(l)}}_{1}\right)\right)^{-1} 
\left( \prod^L_{k=l+1} \left(\norm{\bm{W}^{(l)}}_{\infty} + \epsilon/L \right) \right)^{-1},
\end{multline}
then it follows that $||f-f_{\epsilon}||_{\infty} \leq \epsilon$. 
\end{lemma}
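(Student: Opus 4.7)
The plan is to prove the bound $\|f-f_\epsilon\|_\infty \leq \epsilon$ by a layerwise telescoping decomposition, paired with (i) a local bound on the perturbation introduced at a single layer, and (ii) a propagation bound that carries this error forward through the remaining layers. Recalling that the notation $\|\cdot\|_\infty$ here denotes $\sup_{x\in[-1,1]^{n_0}}\|\cdot\|_2$, I would first define interpolating networks $f^{(0)},f^{(1)},\ldots,f^{(L)}$ with $f^{(0)}=f$ and $f^{(L)}=f_\epsilon$, where $f^{(l)}$ uses the parameters of $f_\epsilon$ on the first $l$ layers and the target parameters of $f$ on the last $L-l$ layers. By the triangle inequality it suffices to show $\|f^{(l)}-f^{(l-1)}\|_\infty \leq \epsilon/L$ for each $l$, since summing over $l$ then gives the desired bound.

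For the local step, observe that $f^{(l)}$ and $f^{(l-1)}$ feed the identical vector $x^{(l-1)}$ into layer $l$, and differ only in the parameters $(W^{(l)},b^{(l)})$ at that layer. Using 1-Lipschitz continuity of ReLU, the error injected at layer $l$ is bounded by $\|\delta h^{(l)}\|_2\leq\|\delta W^{(l)}x^{(l-1)}\|_2+\|\delta b^{(l)}\|_2$, where $|\delta W^{(l)}_{ij}|,|\delta b^{(l)}_i|\leq\epsilon_l$ on the support of the architecture. Each row of $\delta W^{(l)}$ has at most $k_{l,\max}$ nonzeros, so $\|\delta W^{(l)}\|_F\leq\epsilon_l\sqrt{n_l\,k_{l,\max}}$, yielding $\|\delta W^{(l)} x^{(l-1)}\|_2\leq\epsilon_l\sqrt{n_l\,k_{l,\max}}\,\|x^{(l-1)}\|_1$ after using $\|\cdot\|_2\leq\|\cdot\|_1$; the bias contributes at most $\epsilon_l\sqrt{n_l}\leq\epsilon_l\sqrt{n_l\,k_{l,\max}}$. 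Combined,
\[
\|\delta h^{(l)}\|_2\;\leq\;\epsilon_l\,\sqrt{n_l\,k_{l,\max}}\,\bigl(1+\sup_x\|x^{(l-1)}\|_1\bigr),
\]
which explains the first two factors in the denominator of the definition of $\epsilon_l$.

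For the propagation step, I would carry $\delta h^{(l)}$ through the remaining layers of $f^{(l)}$. Each subsequent linear map $W^{(k)}_\epsilon$ contributes a Lipschitz factor bounded by $\|W^{(k)}_\epsilon\|_\infty$, where I use the row-sum bound $\|W^{(k)}_\epsilon\|_\infty\leq\|W^{(k)}\|_\infty+\|\delta W^{(k)}\|_\infty\leq\|W^{(k)}\|_\infty+\epsilon_k\,k_{k,\max}$. A short consistency check then shows that the defining formula of $\epsilon_k$ enforces $\epsilon_k k_{k,\max}\leq\epsilon/L$ (since the other factors in its denominator are at least $1$), giving $\|W^{(k)}_\epsilon\|_\infty\leq\|W^{(k)}\|_\infty+\epsilon/L$. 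Iterating across layers $l+1,\ldots,L$ yields
\[
\|f^{(l)}-f^{(l-1)}\|_\infty\;\leq\;\|\delta h^{(l)}\|_2\;\prod_{k=l+1}^L\bigl(\|W^{(k)}\|_\infty+\epsilon/L\bigr),
\]
and plugging in the definition of $\epsilon_l$ produces exactly $\epsilon/L$. Summing the $L$ contributions gives $\|f-f_\epsilon\|_\infty\leq\epsilon$.

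The step I expect to be the most delicate is the propagation: reconciling the $\ell_2$ output norm (natural for the Frobenius-based local bound) with the $\ell_\infty\to\ell_\infty$ row-sum norm that governs layer-to-layer Lipschitz constants, while avoiding loose dimension factors. The cleanest route is to keep the $\ell_2$ bound on the error vector and pass through the layers using an operator norm upper-bounded by $\|W^{(k)}\|_\infty$ (absorbing any residual dimensional slack into the $\sqrt{n_l k_{l,\max}}$ factor at the originating layer), and to carefully justify that the added $\epsilon/L$ inside the product absorbs \emph{all} perturbations encountered in the downstream layers of $f^{(l)}$, which is precisely where the self-referential structure of the $\epsilon_l$ definition must be unwound inductively.
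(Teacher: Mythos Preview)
Your route and the paper's are essentially the same computation, organized differently. The paper does not introduce interpolating networks; it works directly with the full error and writes the one–step recursion
\[
\|\bm{x}^{(l)}-\bm{x}^{(l)}_\epsilon\|_2 \;\le\; \|(\bm{W}^{(l)}-\bm{W}^{(l)}_\epsilon)\bm{x}^{(l-1)}\|_2 + \|\bm{b}^{(l)}-\bm{b}^{(l)}_\epsilon\|_2 + \|\bm{W}^{(l)}_\epsilon\|_\infty\,\|\bm{x}^{(l-1)}-\bm{x}^{(l-1)}_\epsilon\|_2,
\]
bounds the first two terms by $\epsilon_l\sqrt{m_l}(1+\sup_x\|\bm{x}^{(l-1)}\|_1)$ with $m_l\le n_l k_{l,\max}$, and unrolls to obtain exactly the sum $\sum_l \epsilon_l\sqrt{m_l}(1+\sup_x\|\bm{x}^{(l-1)}\|_1)\prod_{k>l}(\|\bm{W}^{(k)}\|_\infty+\epsilon/L)$ that your telescoping also produces. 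So the decomposition is the same; only the bookkeeping device differs.

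There is, however, an orientation slip in your writeup. With your hybrids as defined (first $l$ layers use $f_\epsilon$'s parameters, last $L-l$ use $f$'s), both $f^{(l)}$ and $f^{(l-1)}$ feed the \emph{perturbed} activation $\bm{x}^{(l-1)}_\epsilon$ into layer $l$, and the downstream propagation runs through the \emph{target} matrices $\bm{W}^{(k)}$. This contradicts two things you then use: the local bound invokes $\sup_x\|\bm{x}^{(l-1)}\|_1$ (target activations), and the propagation invokes $\|\bm{W}^{(k)}_\epsilon\|_\infty$. If you reverse the telescope---let $f^{(l)}$ use target parameters on layers $1,\dots,l$ and perturbed parameters on $l+1,\dots,L$---then the common input at layer $l$ is the target $\bm{x}^{(l-1)}$ and propagation goes through $\bm{W}^{(k)}_\epsilon$, recovering precisely the factors in the definition of $\epsilon_l$; this reversed telescope is exactly the unrolling of the paper's recursion. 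A minor secondary point: you read $\|\bm{W}\|_\infty$ as the induced row-sum norm, whereas the paper defines it as the entrywise maximum $\max_{i,j}|w_{ij}|$; either convention can be made to work here, but be aware the constants in $\epsilon_l$ are calibrated to the paper's convention.
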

The proof is provided in the supplement. 
Note that large weights in every layer could imply that $\epsilon_l$ is exponential in $L$.
However, if we assume bounded weights so that $||\bm{W}^{(l)}||_{\infty} \leq 1$, we receive a moderate scaling of $\epsilon_l = C \epsilon/L$, where $C$ depends on the maximum degree of the neurons $k_{l,\text{max}} \leq n_{l-1}+1$ and the size of the biases via $\sup_{x \in {[-1,1]}^{n_0}} ||\bm{x}^{(l)}||_{1}$.
As we expect each output component of the target network $f$ to be in $[0,1]$, reasonable choices of biases lead usually to $\sup_{x \in {[-1,1]}^{n_0}} ||\bm{x}^{(l)}||_{1} \leq n_{l-1}$ and thus $\epsilon_l = \epsilon/(L (n_l+1)(n_{l-1}+1)e)$. 
Otherwise, we could rescale all parameters and thus the output to ensure desirable scaling. 
However, this would come at the expense of adapting the allowed error $\epsilon$ accordingly.

Next, we extend the proof of the existence of lottery tickets in \citet{pensia2020optimal} to nonzero biases. 
In addition, we generalize it to domains $[-1,1]^{n_0}$ (instead of balls with radius $1$) and present sharper width estimates based on the in-degrees of neurons instead of the full target network width $n_l$. 
The big advantage of our initialization scheme is that we can directly transfer an approach that would assume uniformly distributed parameters in $\theta_i \sim U{([-1,1])}$.

\begin{theorem}[\textbf{Existence of lottery ticket}]\label{thm:LTexist2}
Assume that ${\epsilon, \delta \in {(0,1)}}$ and a target network $f$ with depth $L$ and architecture $\bar{n}$ are given. 
Each weight and bias of a larger deep neural network $f_{0}$ with depth $2L$ and architecture $\bar{n}_0$ is initialized independently, uniformly at random according to $w^{(l)}_{ij} \sim U{([-\sigma_{w,l}, \sigma_{w,l}])}$ and $b^{(l)}_{i} \sim U{([-\prod^l_{k=1}\sigma_{w,k}, \prod^l_{k=1}\sigma_{w,k}])}$. Then, with probability at least $1-\delta$, $f_{0}$ contains an approximation $f_{\epsilon} \subset f_{0}$ so that $||f- \lambda f_{\epsilon}||_{\infty} \leq \epsilon$ if for l = 1, ..., L
\begin{align}
n_{2l-1,0} = C n_{l-1} \log\left(\frac{k_{l-1,\text{max}} n_{l}}{\min\left\{\epsilon_l, \delta/L\right\}}\right)
  \text{ and } n_{2l,0} = n_l,
\end{align}
where $\epsilon_l$ is defined in Eq.~(\ref{eq:epsTheta}) and the output is scaled by $\lambda = \prod^{2L}_{l=1}\sigma^{-1}_{w,l}$.
\end{theorem}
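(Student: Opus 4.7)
The plan is to adapt the proof strategy of \citet{pensia2020optimal} in three stages: (i) use Lemma~\ref{thm:scaling} to reduce to a canonical uniform-$[-1,1]$ setting, (ii) propagate a per-parameter error budget via Lemma~\ref{thm:approx}, and (iii) invoke a random subset-sum approximation argument for weights and biases simultaneously.

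First, I would transfer the randomness through scaling: writing $w^{(l)}_{ij} = \sigma_{w,l} \tilde w^{(l)}_{ij}$ with $\tilde w^{(l)}_{ij} \sim U[-1,1]$ and $b^{(l)}_{i} = \bigl(\prod_{k\le l}\sigma_{w,k}\bigr) \tilde b^{(l)}_{i}$ with $\tilde b^{(l)}_{i} \sim U[-1,1]$, Lemma~\ref{thm:scaling} shows that $\lambda f_0$ is equivalent to a mother network whose parameters are all uniform in $[-1,1]$. It therefore suffices to exhibit, inside this canonical mother network, a subnetwork $\tilde f_\epsilon$ with $\|f-\tilde f_\epsilon\|_\infty \leq \epsilon$. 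By Lemma~\ref{thm:approx}, this reduces to matching every target parameter in layer $l$ within tolerance $\epsilon_l$ defined in Eq.~(\ref{eq:epsTheta}).

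Next I would carry out the construction of Fig.~\ref{fig:proof_construction}: for each target layer $l$, the mother network uses two layers, with layer $2l$ of width $n_l$ mirroring the target neurons (its biases serving as the raw material to approximate $b^{*}_j$), and layer $2l-1$ an auxiliary layer of degree-one neurons used for subset-sum. To cope with the ReLU nonlinearity in the auxiliary layer, I would implicitly pair each auxiliary neuron $k$ (incoming weight $a_k$, bias $\beta_k$, outgoing weight $c_k$) with a twin $k'$ carrying negated parameters, so that their joint contribution to output neuron $j$ is the linear expression $c_k(a_k x_i + \beta_k)$. Aggregating over a pool $K_{ij}$ of pairs assigned to emulate the connection from input $i$ to output $j$ yields an effective contribution $\bigl(\sum_{k\in K_{ij}} c_k a_k\bigr) x_i + \sum_{k \in K_{ij}} c_k \beta_k$. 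The core step is to approximate each target weight $w^{*}_{ij}\in[-1,1]$ by a signed subset sum $\sum_{k \in S_{ij}\subset K_{ij}} c_k a_k$: Lueker's random subset-sum lemma guarantees that with $m = O\bigl(\log(k_{l-1,\text{max}} n_l L / (\epsilon_l \delta))\bigr)$ i.i.d.\ candidates, this is possible within error $\epsilon_l$ with failure probability at most $\delta/(L n_l k_{l-1,\text{max}})$. Multiplying by the $n_{l-1}$ input groups per output neuron and applying a union bound over all $O(L n_l k_{l-1,\text{max}})$ parameters yields the stated width $n_{2l-1,0} = C n_{l-1}\log\bigl(k_{l-1,\text{max}} n_l/\min\{\epsilon_l,\delta/L\}\bigr)$, while $n_{2l,0}=n_l$ is dictated by the target.

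The main technical obstacle is the coupling between weight and bias approximations inside each auxiliary block: the "linear leftover" $\sum_{k} c_k \beta_k$ from approximating a weight spills into the effective bias of the downstream neuron, so the subset-sum problems for weights and biases are not independent. I would resolve this in two complementary ways. For each output neuron $j$, I would reserve a disjoint family of auxiliary pairs dedicated to approximating $b^{*}_{j}$ (entering through a constant-1 input proxy realized by a bias-only contribution), while requiring that, for every weight pool $K_{ij}$, the bias leftover $\sum_{k\in S_{ij}} c_k \beta_k$ be absorbed into the downstream bias budget. Since the joint distribution of $(c_k a_k, c_k \beta_k)$ is symmetric around the origin with enough spread, a second application of the subset-sum lemma simultaneously controls both coordinates at the cost of only a constant factor in $m$. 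Finally, because target biases $b^{*}_{j}$ may lie anywhere in $[-1,1]$ rather than in the smaller interval $[-1/2,1/2]$ required by the standard subset-sum statement, I would split each target parameter into two pieces of magnitude at most $1/2$, approximate each by a separate subset sum, and recombine; this contributes only another constant factor. The remainder of the proof is a union bound across layers together with the error propagation of Lemma~\ref{thm:approx}.
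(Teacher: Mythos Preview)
Your stages (i) and (ii) --- the scaling reduction via Lemma~\ref{thm:scaling} and the per-layer error budget via Lemma~\ref{thm:approx} --- match the paper. The gap is in how you linearize the auxiliary ReLUs. You pair each auxiliary neuron $k$ with a twin $k'$ carrying exactly negated parameters so that $c_k\phi(a_k x_i + \beta_k) + c_{k'}\phi(-a_k x_i - \beta_k) = c_k(a_k x_i + \beta_k)$. But such twins do not exist in a mother network whose parameters are drawn \emph{independently}: pruning can only zero entries, not negate them, and the probability that some other random neuron happens to carry an approximately negated triple $(a,\beta,c)$ to precision $\epsilon_l$ is far too small to support a logarithmic width bound. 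Without the twin, $\phi(a_k x_i + \beta_k)$ has a random kink inside the input domain and is not of the form ``coefficient times $x_i$ plus constant'', so the subset-sum reduction you describe never gets off the ground.

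The paper's construction is simpler and sidesteps this entirely: prune every auxiliary neuron in layer $2l-1$ to in-degree exactly one --- either a single surviving weight $w_1$ to one input $x_j^{(l-1)}$ with the bias zeroed, or the bias $\beta$ alone with all weights zeroed. Since $x_j^{(l-1)}\ge 0$ is a ReLU output (and for the first layer one uses the split $x=\phi(x)-\phi(-x)$ with separate pools for $w_1>0$ and $w_1<0$), keeping only neurons with positive surviving parameter makes the ReLU act as the identity, no pairing required. This also removes your ``bias leftover'' complication: weight-approximation neurons have their biases pruned and bias-approximation neurons have their weights pruned, so all subset-sum problems are independent and one-dimensional, and your two-coordinate simultaneous subset-sum step (which Lueker's lemma does not cover anyway) is never needed.
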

A similar statement holds also for normal distributions, i.e., $w^{(l)}_{ij} \sim \mathcal{N}\left(0, \sigma^2_{w,l}\right)$ and $b^{(l)}_{i} \sim \mathcal{N}\left(0, \prod^l_{k=1}\sigma^2_{w,k}\right)$.
Note that, essentially, we receive the same scaling as in case of zero biases.
Only the maximum degree $k_{l-1,\text{max}}$ is modified by $+1$.
The reason is that we can treat each bias as an additional weight in the construction of a ticket.
We provide the full proof in the appendix and restrict ourselves in the following to the description of the main idea.\\
\emph{Proof Idea}.
Layer $2l-1$ and $2l$ of the large network serve the representation of the neurons in Layer $l$ of the target network, e.g, neuron $\phi\left(\sum_j w_{ij} x^{(l-1)}_j + b_i\right)$. 
By using the identity $x = \phi(x) - \phi(-x)$ for ReLUs, we can express the preactivation also as 
$\sum_j \phi\left(w_{ij} x^{(l-1)}_j\right) - \phi\left(-w_{ij} x^{(l-1)}_j\right) + \text{sign}(b_i) \phi(|b_i|)$.
We provide a visualization of this construction in Fig.~\ref{fig:proof_construction}, where we highlight the construction for a single weight and neuron in the target, through one term $\phi\left(w_{ij} x^{(l-1)}_j\right)$ and bias $\phi(|b_i^{(l-1)}|)$, which are each approximated through a subset sum construction of randomly initialized weights and biases in $f_0$.
The width of the intermediate layers needs to be only of order $\log(1/\epsilon)$ according to results by \citet{subsetsum} on the subset sum problem, which can be applied to finding lottery tickets \citep{pensia2020optimal}.
Accordingly, with probability at least $1-\delta$, for each parameter $\theta$ (i.e., $w_{ij}$, $-w_{ij}$, or $|b_i|$) exists a subset $S$ of $n$ uniformly distributed parameters $X_i\sim U([-1,1])$ so that $|\theta - \sum_{i\in S} X_i| < \epsilon_l$ for $n \geq C \log 1/\min\{\epsilon_l, \delta\}$. 
Repeating this argument in combination with union bounds over $k_i$ parameters per neuron, all neurons in a layer, and all layers, leads to the desired results.
While this construction is purely theoretical in nature, it turns out that the mother network and approximation of the target have properties that are comparable to real world lottery ticket settings.

\subsection{Proof Construction -- From Theory to Practice}

We, here, conduct a study on the efficacy of the existence proof construction with respect to the networks $f_0$ and $f_{\epsilon}$, the code is available online.\!\footnote{\oururl}
In particular, we show that the proof construction of mother networks $f_{0}$ of Thm.~\ref{thm:LTexist2} admits for efficient approximations $f_{\epsilon}$ for a given target network $f$. 
As a case study, we consider a LeNet~\cite{lecun-gradientbased-learning-applied-1998} architecture of the form  $784 \rightarrow 300 \rightarrow 100 \rightarrow 10$, where each fully connected layer is followed by ReLU activations. 
We train such a He initialized LeNet by Iterative Magnitude Pruning~\cite{frankle2019lottery} on MNIST, which results in a small target network with around 20k parameters, and an accuracy of $97.96$. 
We consider the resulting ticket as target network $f$.

As detailed in the Theorem \ref{thm:LTexist2}, each layer in the target network can be approximated by two subsequent layers in a mother network $f_0$ of depth $2L$. 
Our approximation to the target lottery ticket is a subnetwork of this mother network, $f_{\eps} \subset f_0$. 
We construct each layer of $f_{\eps}$ by solving the subset sum problem neuron-wise, aiming to find the smallest subset of weights in $f_0$ that allow to approximate each weight in $f$, as implied by the proof.
Our experiments verify that the constructed SLTs indeed do as well as the target network on MNIST data for our proposed nonzero initialization schemes, including a 'looks linear' orthogonal initialization (see App. Tab.~\ref{table:const_results}).
Furthermore, these tickets are of great sparsity with respect to the constructed mother network and only a factor of $\sim6$ larger than the (real) target, providing evidence that extremely sparse solutions exist in large, overparametrized neural networks.
Finally, although purely theoretical, the construction used within the proof leads to a mother network $f_0$ of architecture $784 \rightarrow 31400 \rightarrow 300 \rightarrow 12040 \rightarrow 100 \rightarrow 4040 \rightarrow 10$, which is well within the size of modern neural networks that are considered for pruning.

\section{Parameter Scaling during Pruning}

According to Sec.~\ref{sec:outscaling} and our existence proof, we expect that the output of a pruned network usually does not match the right target range.
The lottery ticket $f_{\epsilon}$ needs to be scaled by a factor $\lambda > 0$ (and usually $\lambda > 1$) .
In the existence proof, $\lambda = \prod^L_{m=1}\sigma_{w,m}$ but this factor can be vanishing small for very deep networks.
Furthermore, in many applications we do not know the exact size the of the network parameters and the output might also not be restricted to $[-1,1]$.
For these reasons, we propose to learn an appropriate output scaling factor $\lambda > 0$ that successively adapts the lottery ticket $f_{\epsilon}$ after each pruning epoch.
For regression minimizing the mean squared error with respect to $N$ data samples with targets $y_{i,s}$, this scalar can be easily computed as 
\begin{align}\label{eq:msescaling}
    \lambda_{\text{mse}} = \left(\sum^{N}_{s=1} \sum^{n_{L}}_{i=1} y_{i,s}x^{(L)}_{i,s}\right)/ \left(\sum^{N}_{s=1} \sum^{n_{L}}_{i=1} x^{(L)}_{i,s} x^{(L)}_{i,s}\right).
\end{align}
In case of a different loss $\mathcal{L}$, we only have to solve a one-dimensional optimization problem 
\begin{align}
 \min_{\lambda > 0} \mathcal{L}(y, \lambda x^{(L)})    
\end{align}
which can, for instance, be achieved with SGD. 
To distribute the scaling factor on the different layers, we use again the parameter transformation in Lemma~\ref{thm:scaling}, which is $w^{(l)}_{ij} = w^{(l)}_{ij} \lambda^{1/L}, \qquad  b^{(l)}_{i} = b^{(l)}_{i} \lambda^{l/L},$
which ensures that the overall output of the neural network is scaled by $\lambda$.

Combining this parameter rescaling for tickets found in each epoch with the SLT algorithm \texttt{edge-popup}, along with a slow annealing of the target sparsity throughout the course of pruning, we obtain \texttt{edge-popup-scaled}, for which we report pseudocode in Alg.~\ref{alg:edgepopup_scaled}. 
As we show next, this parameter rescaling allows us to obtain much sparser lottery tickets than pure pruning.

\subsection{A synthetic case study}

\begin{figure*}
    \centering
    \includegraphics[width=\linewidth]{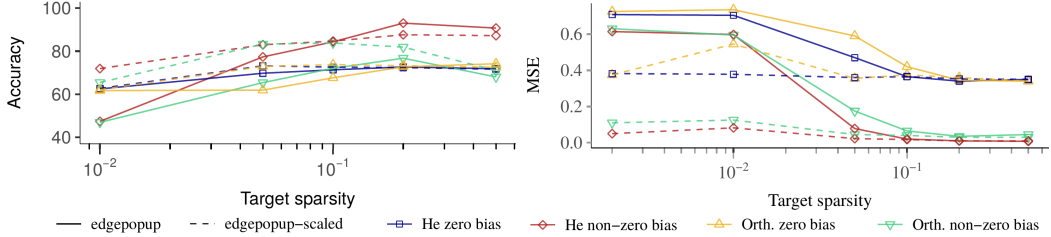}
    \caption{\textit{Strong lottery ticket pruning with \texttt{edge-popup}.} Shown are results for discovered tickets of different target sparsities, methods, and initialization schemes. Left: Ellipse data, higher is better (Classification). Right: Shifted ReLU data, lower is better (Regression).}
    \label{fig:synth_results}
\end{figure*}

We compare \texttt{edge-popup} with annealing of target sparsities against a version with scaling \texttt{edge-popup-scaled} for networks initialized with and without biases.
On two synthetic benchmarks, which turn out to be challenging settings for SLT pruning, we evaluate these methods for both He as well as `looks-linear' orthogonal initialization with and without nonzero bias extensions.
All experiments are carried out on commodity hardware.
\texttt{Edge-popup} training was conducted in the default way suggested by the original authors by SGD with momentum of $0.9$ and weight decay $0.0005$, combined with cosine annealing of the learning rate starting from $0.1$.
We train for $e=10$ epochs and $e_a=5$ annealing epochs in case of the shifted ReLU example and $e_a = 20$ annealing epochs in case of the ellipse and sparsity levels ${0.01, 0.05}$.
In case of higher sparsity levels, $e_a = 10$ annealing epochs are sufficient. 
During annealing, we slowly reduce the sparsity over time by $\rho^{i / e_a}$, where $\rho$ is the desired network sparsity, and $i$ is the current epoch.
We used a batch size of $32$ in all experiments and report mean  based on $5$ repetitions.
We make all code and data publicly available.\!\footnote{\oururl}

\subsubsection*{A Shifted ReLU}

First, we consider data of a regression problem that follows a shifted ReLU function $\phi_b(x)=\max(0, x+b)$ with $b=0.5$. 
We thus draw $N=10^4$ iid samples $x_i \sim U[-1,1]$ with targets $y_i = \phi(x_i+0.5) + n_i$ with independent Gaussian noise $n_i \sim \mathcal{N}(0, 0.01^2)$.
For a network of depth $5$, where each layer is of width $100$, we retrieve strong lottery tickets at target sparsities $\{0.002, 0.01, 0.05, 0.1, 0.2\}$. 
We report the mean squared error (MSE) of the strong tickets on the test set as mean across $5$ repetitions in Fig.~\ref{fig:synth_results} (right).

We observe that, consistently, the nonzero bias initialized networks enable the recovery of strong tickets with orders of magnitude smaller errors than in networks with zero-initialized bias.
Furthermore, \texttt{edge-popup-scaled} with proper rescaling of parameters consistently outperforms the vanilla (unscaled) algorithm for extreme sparsities.
At lower sparsity levels, rescaled \texttt{edge-popup} allows to retrieve well performing tickets that match the low error of their more dense counterparts, whereas vanilla \texttt{edge-popup} fails to find good tickets.

\subsubsection*{The Onion Slice}

Next, we consider a classification problem, where points are arranged in elliptic rings, and each point is labeled by the ring it appears in.
$N=10^4$ inputs are again sampled iid from uniform distributions $x_1, x_2 \sim U[-1,1]$ and one of four labels is assigned as target based on the value $y = 0.5 (x_1-0.3)^2 + 1.2 (x_2 + 0.5)^2$. 
Class boundaries are defined as $(0.2, 0.5, 0.7)$, while noise is introduced by flipping a label to a neighboring class with probability $0.01$.

For networks of depth $5$ and width $100$, we retrieve strong lottery tickets at target sparsities $\{0.01, 0.05, 0.1, 0.2, 0.5\}$.
We report the accuracy of tickets on the test set as mean across $5$ repetitions in Fig. \ref{fig:synth_results} (left).
Tickets pruned from networks initialized with nonzero biases outperform their zero-bias counterparts. 
An exception to this rule is given by the unscaled \texttt{edge-popup} for sparsity $0.01$, where both initialization approaches (with nonzero and zero biases) show unsatisfactory performance.
In contrast, the rescaled \texttt{edge-popup} with nonzero bias He initialization is still able to retrieve extremely sparse tickets with more than $10$ accuracy points margin to all other approaches. 

\section{Discussion}\label{sec:discussion}

Strong lottery tickets, as currently defined in the literature, do not lend themselves as universal function approximators due to the limitation to the standard zero bias initialization schemes for neural networks.
We hence transferred the strong lottery ticket hypothesis to neural networks with potentially nonzero initial biases and proved the existence of strong lottery tickets under realistic conditions with respect to the network width and initialization scheme.
This generalization equips training by pruning for strong lottery tickets with the universal approximation property.

Along with the proof, we have extended standard initialization schemes to nonzero biases and formally shown that our proposal defines well trainable neural networks, while they support the existence of strong lottery tickets.
These initializations include the 'looks-linear' approach \citep{dyniso,orthoCNN} that ensures initial dynamical isometry of ReLU networks, which often leads to favorable training properties.

Based on our theoretical insights, we have derived a parameter scaling strategy that enables pruning algorithms to find sparser strong lottery tickets.
We have extended the \texttt{edge-popup} algorithm \citep{ramanujan2019whats} for strong lottery ticket pruning accordingly and demonstrated the utility of our innovations on two case studies.
For imaging data, our nonzero bias initializations are well trainable, but the current generation of algorithms lacks the ability to draw an advantage over zero bias initialized networks (see App.~\ref{app:cifar}), likely due to their parameter-inefficacy that has been reported before~\citep{fischer2022planting}.
With the development of pruning algorithms that can find highly sparse strong lottery tickets, we anticipate that nonzero bias initializations are important for lottery ticket pruning in theory as well as practice. 


\bibliography{lottery}

\newpage

\appendix

\section{Theory}
In the following section, we present the proofs of the theorems and lemmas of the main manuscript.
\subsection{Motivation: Factorization of univariate zero-bias networks}
\begin{lemma*}
  Univariate neural networks with ReLU activations  $f:\reals\rightarrow\reals^{n_L}$ of arbitrary depth $L$ and layer widths $n_1,\ldots,n_L$, and without biases, represent a function $f(x) = \bm{W}_{+} \phi(x) + \bm{W}_{-} \phi(-x), \bm{W}_{+},\bm{W}_{-}\in\reals^{n_L \times 1}$.
\end{lemma*}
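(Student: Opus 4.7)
}
The plan is to induct on the layer index $l$, showing that for every $l\in\{1,\ldots,L\}$ there exist (fixed) vectors $\bm{a}^{(l)},\bm{b}^{(l)}\in\reals^{n_l}$ such that
\begin{equation*}
\bm{x}^{(l)}(x) = \bm{a}^{(l)}\,\phi(x) + \bm{b}^{(l)}\,\phi(-x) \quad \text{for all } x\in\reals.
\end{equation*}
Applying this at $l=L$ immediately yields the claim with $\bm{W}_{+}=\bm{a}^{(L)}$ and $\bm{W}_{-}=\bm{b}^{(L)}$. The central driver of the whole argument is the elementary but crucial fact that $\phi(x)$ and $\phi(-x)$ have disjoint supports in $x$ (and agree only at $x=0$ where both vanish), so any linear combination $u\,\phi(x)+v\,\phi(-x)$ is, after applying $\phi$ componentwise, equal to $u^{+}\phi(x)+v^{+}\phi(-x)$ where $(\cdot)^{+}=\max(\cdot,0)$.

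For the base case $l=1$, I would write out $x^{(1)}_{i}=\phi(w^{(1)}_{i}x)$ and split on the sign of $w^{(1)}_{i}$: if $w^{(1)}_{i}\ge 0$ then $x^{(1)}_{i}=w^{(1)}_{i}\phi(x)$, and if $w^{(1)}_{i}<0$ then $x^{(1)}_{i}=-w^{(1)}_{i}\phi(-x)$. This fixes $\bm{a}^{(1)}$ and $\bm{b}^{(1)}$ explicitly. For the inductive step, assume the representation holds at layer $l$ and compute
\begin{equation*}
\bm{h}^{(l+1)} = \bm{W}^{(l+1)}\bm{x}^{(l)} = \bigl(\bm{W}^{(l+1)}\bm{a}^{(l)}\bigr)\phi(x) + \bigl(\bm{W}^{(l+1)}\bm{b}^{(l)}\bigr)\phi(-x).
\end{equation*}
Setting $\bm{u}=\bm{W}^{(l+1)}\bm{a}^{(l)}$ and $\bm{v}=\bm{W}^{(l+1)}\bm{b}^{(l)}$ and applying the disjoint-supports observation componentwise gives $\bm{x}^{(l+1)} = \phi(\bm{u})\,\phi(x) + \phi(\bm{v})\,\phi(-x)$, which yields the recursive definitions $\bm{a}^{(l+1)}=\phi(\bm{W}^{(l+1)}\bm{a}^{(l)})$ and $\bm{b}^{(l+1)}=\phi(\bm{W}^{(l+1)}\bm{b}^{(l)})$ and completes the induction.

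The only part that requires a touch of care is the inductive step: one has to verify the identity $\phi(u\,\phi(x)+v\,\phi(-x)) = \phi(u)\,\phi(x)+\phi(v)\,\phi(-x)$ by splitting into the cases $x\ge 0$ and $x<0$. This is routine but is where the absence of biases is decisive, since a nonzero bias would mix the two branches and break the factorization. Beyond that, the proof is a clean two-line induction and I do not expect any further obstacles.
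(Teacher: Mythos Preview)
Your proposal is correct and follows essentially the same approach as the paper's proof: induction on the layer index, using that $\phi(x)$ and $\phi(-x)$ have disjoint supports so the two branches never interact. Your formulation is in fact tidier than the paper's---you isolate the key identity $\phi(u\,\phi(x)+v\,\phi(-x))=\phi(u)\,\phi(x)+\phi(v)\,\phi(-x)$ and obtain the clean recursions $\bm{a}^{(l+1)}=\phi(\bm{W}^{(l+1)}\bm{a}^{(l)})$, $\bm{b}^{(l+1)}=\phi(\bm{W}^{(l+1)}\bm{b}^{(l)})$, whereas the paper carries the case split through explicitly.
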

\begin{proof}
  We prove by induction over the number of hidden layers $L$ of the network.
  First, assume that $L=1$, i.e. $f^{(1)}(x)=\phi(\bm{W}^{(1)} x)$, with $\bm{W}^{(1)}\in\reals^{n_1 \times 1}$.
  For any output neuron $f_j^{(1)}(x),~j= 1,\ldots, n_1$, we have
  \begin{align}
    f_j^{(1)}(x) =& \sum_{i=1}^{n_0} \phi(\bm{W}^{(1)}_i x)\\
              =& \begin{cases}
                    \sum_{i=1}^{n_0} \bm{W}^{(1)}_i I(\bm{W}^{(1)}_i > 0) \phi(x) ,& \text{if } x\geq 0\\
                    \sum_{i=1}^{n_0} -\bm{W}^{(1)}_i I(\bm{W}^{(1)}_i < 0) \phi(-x),& \text{otherwise,}
                \end{cases}
  \end{align}
  where $I(.)$ is the indicator function.
  For networks with a single layer, we thus get $f^{(1)}(x) = \bm{W}^{(1)}_{+} \phi(x) + \bm{W}^{(1)}_{-} \phi(-x)$ with
  \begin{align}
    \bm{W}^{(1)}_{+} =& \bm{W}^{(1)}_i I(\bm{W}^{(1)}_i > 0) \,,\\
    \bm{W}^{(1)}_{-} =& -\bm{W}^{(1)}_i I(\bm{W}^{(1)}_i < 0)\,,
  \end{align}
  which proves our claim for $L=1$.
  
  Next, our induction hypothesis is that $f_j^{(l)} = \bm{W}^{(l)}_{+} \phi(x) + \bm{W}^{(l)}_{-}\phi(-x)$.
  For networks with $l+1$ layer, $f^{(l+1)}(x) = \bm{W}^{(l+1)} \phi\left(f^{(l)}(x)\right), \bm{W}^{(l+1)}\in\reals^{n_{l+1} \times n_l}$, for each
  output neuron $f_j^{(l+1)}(x),~j=1,\ldots,n_{l+1}$ we obtain
  \begin{align}
    f_j^{(l+1)}(x) =& \sum_{i=1}^{n_{l+1}}\phi\left(f^{(l)}(x)\right)_i \bm{W}^{(l+1)}_{ji}\\
                   =& \sum_{i=1}^{n_{l+1}}\phi\left(\bm{W}^{(l)}_{+} \phi(x) + \bm{W}^{(l)}_{-}\phi(-x)\right)_i \bm{W}^{(l+1)}_{ji} \ \ (\text{induction hypothesis})\\
                   =& \begin{cases}
                        \sum_{i=1}^{n_{l+1}}\phi\left(\bm{W}^{(l)}_{+} x\right)_i \bm{W}^{(l+1)}_{ji} ,& \text{if } x \geq 0\\
                        \sum_{i=1}^{n_{l+1}}\phi\left(-\bm{W}^{(l)}_{-}x\right)_i \bm{W}^{(l+1)}_{ji} ,& \text{otherwise}
                      \end{cases} \\
                   =& \begin{cases}
                        \sum_{i=1}^{n_{l+1}}\bm{W}^{(l)}_{+,i} \bm{W}^{(l+1)}_{ji} x ,~~~~~~~~~~~& \text{if } x \geq 0 \land \bm{W}_{+,i} > 0\\
                        \sum_{i=1}^{n_{l+1}}-\bm{W}^{(l)}_{-,i} \bm{W}^{(l+1)}_{ji} x ,&\text{if } x < 0 \land \bm{W}_{-,i} > 0 \\
                        0 ,&\text{otherwise.}
                      \end{cases}
  \end{align}
  Hence, we get 
  \begin{align}
    \bm{W}^{(l+1)}_{+}=& \bm{W}^{(l+1)}\bm{W}^{(l)}_{+}\\
    \bm{W}^{(l+1)}_{-}=&- \bm{W}^{(l+1)}\bm{W}^{(l)}_{-},
  \end{align}
  which concludes the induction step, and the proof.
\end{proof}

\subsection{ReLU networks without bias are not universal approximators}
\label{app:counterex_universal}
We, here, provide the necessary derivations for the counter-examples used in Theorem~\ref{thm:nonUniversalApprox}.
For the example $g(x)=0.5, x\in[-1,1]$, by minimizing the mean squared error (MSE), we get a loss of
\[
  \mathcal{L}(x; w_+, w_-) = \int_{-1}^{1}\left(g(x) - f(x)\right)^2 dx,
\]
where $f(x)$ is the neural network, which from Lemma \ref{lem:networkfact} we know is only dependent on $w_+,w_- \in \reals$.
Solving this integral, we obtain
\begin{align}
\mathcal{L}(x; w_+, w_-) =& \int_{-1}^{1}\left(g(x) - f(x)\right)^2 dx\\
                         =& \int_{-1}^0 (0.5 - w_-\phi(-x))^2 dx \\
                          &+ \int_0^1 (0.5 - w_+ \phi(x))^2 dx~~~~~~~~~~~~~~~~~~~~~~~~(\text{Lemma \ref{lem:networkfact}}) \\
                         =&\int_0^1 0.5 - w_-\phi(x) + w_-^2 \phi(x)^2 \\
                          &- w_+ \phi(x) + w_+^2\phi(x)^2 dx \\
                         =&\int_0^1 0.5 - w_-x + w_-^2 x^2 - w_+ x + w_+^2x^2 dx~~~~(\text{Def. } \phi(x) \text{ for } x>0) \\
                         =& \left[0.5x -0.5w_-x^2+\frac{1}{3}w_-^2x^3 \right. \\
                          &\left. -0.5w_+x^2+\frac{1}{3}w_+^2x^3+C\right]_0^1\,.~~~~~~~~~~~~~~~~~~~~~(\text{Primitive function})
\end{align}
We are interested in the network, and hence parameters $w_+^*, w_-^*$, that minimize the loss.
Thus, we solve $\frac{\mathcal{L}(x; w_+, w_-)}{dw_+}\stackrel{!}{=} 0$, which yields $w_+^*=\frac{3}{4}$,
and $\frac{\mathcal{L}(x; w_+, w_-)}{dw_-}\stackrel{!}{=} 0$, which yields $w_-^*=\frac{3}{4}$.
We can directly see from the shape of the function that these values are indeed a minimum, and not a maximum.
Plugging this back into the primitive function above, we obtain $\mathcal{L}(x; w_+^*, w_-^*) = \frac{1}{8}$.
Hence, for any $\epsilon < \frac{1}{8}$ there does not exist a network that can approximate the function to this error.

For a slightly more complicated function, without explicit offset, we consider $g(x)=e^x, x\in[-1,1]$.
Analogue to above, we first minimize the MSE
\begin{align}
\mathcal{L}(x; w_+, w_-) =& \int_{-1}^0 (e^{x} - w_-\phi(-x))^2 dx \\
                          &+ \int_0^1 (e^{x} - w_+ \phi(x))^2 dx \\
                         =&\int_0^1 e^{-2x} - 2e^{-x}w_-x + w_-^2 x^2 \\
                          &+ e^{2x}- 2e^x w_+ x + w_+^2x^2 dx \\
                         =& \left[-0.5e^{-2x} + 2e^{-x}w_- x + 2e^{-x}w_- + \frac{1}{3}w_-^2x^3 \right. \\
                          &\left. +0.5e^{2x} - 2e^xw_+x + 2e^xw_+ +\frac{1}{3}w_+^2x^3+C\right]_0^1\,.
\end{align}
For $\frac{\mathcal{L}(x; w_+, w_-)}{dw_+}\stackrel{!}{=} 0$, we get $w_+^*=3$,
and $\frac{\mathcal{L}(x; w_+, w_-)}{dw_-}\stackrel{!}{=} 0$, which gives $w_-^*=-3(2e^{-1}-1)$.
Again, we can observe from the shape of the function that is given by $f(x)$ that these values are indeed a minimum and not a maximum.
Plugging back in to the primitive function, we get $\mathcal{L}(x; w_+^*, w_-^*) =11.5e^{-1}- 12e^{-2} + 0.5e^2-6$.
Hence, for any smaller $\epsilon$ we do not have a network that can approximate to this error.

\subsection{Scaling relationship: Proof of Lemma~\ref{thm:scaling}}
\begin{theorem*}
Let  $h\left(\bm{\theta_0}, \bm{\sigma}\right)$ denote a transformation of the parameters $\bm{\theta_0}$ of the deep neural network $f_0$, where each weight is multiplied by a scalar $\sigma_l$, i.e., $h^{(l)}_{ij}(w^{(l)}_{0,ij}) = \sigma_l w^{(l)}_{0,ij}$, and each bias is transformed to $h^{(l)}_{i} (b^{(l)}_{0,i}) = \prod^l_{m=1}\sigma_m b^{(l)}_{0,i}$.
Then, we have 
$f\left(x \mid h(\bm{\theta_0}, \bm{\sigma})\right) = \prod^L_{l=1} \sigma_l f(x \mid \bm{\theta_0})$.
\end{theorem*}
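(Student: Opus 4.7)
The plan is to prove this by induction on the layer index, tracking how a cumulative scaling factor propagates through the network. Let $\bm{x}^{(l)}$ and $\bm{h}^{(l)}$ denote the activations and pre-activations of the original network $f_0$ with parameters $\bm{\theta_0}$, and let $\tilde{\bm{x}}^{(l)}$, $\tilde{\bm{h}}^{(l)}$ denote the corresponding quantities for the transformed network $f(\,\cdot\mid h(\bm{\theta_0}, \bm{\sigma}))$. I will show by induction on $l$ that
\begin{equation*}
\tilde{\bm{x}}^{(l)} = \left(\prod_{m=1}^{l} \sigma_m\right) \bm{x}^{(l)},
\end{equation*}
from which the claim follows by evaluating at $l = L$ (implicitly assuming $\sigma_l > 0$, which is what the setting of Section~\ref{sec:outscaling} requires; otherwise positive homogeneity of ReLU breaks).

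For the base case $l = 0$, both networks read in the same input $\tilde{\bm{x}}^{(0)} = \bm{x}^{(0)}$, and the empty product $\prod_{m=1}^{0} \sigma_m = 1$ makes the identity trivially hold. For the inductive step, assume the identity holds at $l-1$. Using the transformed parameters $\sigma_l \bm{W}^{(l)}$ and $\left(\prod_{m=1}^{l}\sigma_m\right)\bm{b}^{(l)}$, the pre-activation factors as
\begin{equation*}
\tilde{\bm{h}}^{(l)} = \sigma_l \bm{W}^{(l)}\tilde{\bm{x}}^{(l-1)} + \left(\prod_{m=1}^{l}\sigma_m\right)\bm{b}^{(l)} = \left(\prod_{m=1}^{l}\sigma_m\right)\left(\bm{W}^{(l)}\bm{x}^{(l-1)} + \bm{b}^{(l)}\right) = \left(\prod_{m=1}^{l}\sigma_m\right)\bm{h}^{(l)},
\end{equation*}
where in the middle step I factor out the cumulative scalar using the induction hypothesis. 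Applying ReLU and invoking its positive homogeneity $\phi(\alpha y) = \alpha \phi(y)$ for $\alpha \geq 0$ gives $\tilde{\bm{x}}^{(l)} = \phi(\tilde{\bm{h}}^{(l)}) = \left(\prod_{m=1}^{l}\sigma_m\right)\bm{x}^{(l)}$, closing the induction. Setting $l = L$ yields $f(x\mid h(\bm{\theta_0}, \bm{\sigma})) = \prod_{l=1}^{L}\sigma_l \, f(x\mid \bm{\theta_0})$.

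The proof is essentially bookkeeping, so there is no deep obstacle; the only nontrivial design choice sits in the statement itself, namely the asymmetric transformation rule $h^{(l)}_i(b^{(l)}_{0,i}) = \prod_{m=1}^{l}\sigma_m \, b^{(l)}_{0,i}$ for biases versus the single-factor rule for weights. This is precisely what is needed for the bias term $\left(\prod_{m=1}^{l}\sigma_m\right)\bm{b}^{(l)}$ to combine with the already-scaled $\sigma_l \bm{W}^{(l)}\tilde{\bm{x}}^{(l-1)}$ contribution (which carries a factor of $\sigma_l \cdot \prod_{m=1}^{l-1}\sigma_m = \prod_{m=1}^{l}\sigma_m$ by induction) into a common prefactor that can be pulled outside the ReLU. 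The only mild subtlety to flag is the reliance on $\sigma_l \geq 0$ for positive homogeneity of ReLU; this matches the usage in the paper, where the $\sigma_l$'s originate as standard deviations of initialization distributions.
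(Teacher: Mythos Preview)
Your proof is correct and follows essentially the same route as the paper's: induction on the layer index, using the positive homogeneity of ReLU to pull the accumulated scalar $\prod_{m=1}^{l}\sigma_m$ outside $\phi$ at each step. Your version is in fact slightly more careful in explicitly flagging the requirement $\sigma_l \geq 0$, which the paper's proof invokes only implicitly under the label ``homogeneity of $\phi$''.
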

\begin{proof}
Let the activation function $\phi$ of a neuron either be a ReLU $\phi(x) = \max(x,0)$ or the identity $\phi(x) = x$. A neuron $x^{(l)}_i$ in the original network becomes $g\left(x^{(l)}_i\right)$ after parameter transformation. 
We prove the statement by induction over the depth $L$ of a deep neural network. 

First, assume that $L=1$ so that we have $x^{(1)}_i = \phi\left(\sum_j w^{(1)}_{ij} x_j + b^{(1)}_i\right)$ After transformation by $w^{(1)}_{ij} \mapsto \sigma_1 w^{(1)}_{ij} $ and  $b^{(1)}_i \mapsto \sigma_1 b^{(1)}_i$, we receive $g\left(x^{(1)}_i\right) = \phi\left(\sum_j w^{(1)}_{ij} \sigma_1 x_j + \sigma_1 b^{(1)}_i\right) = \sigma_1 x^{(1)}_i$ because of the homogeneity of $\phi(\cdot)$.
This proves our claim for $L=1$.

Next, our induction hypothesis is that $g\left(x^{(L-1)}_i\right)  =  \prod^{L-1}_{m=1}\sigma_m x^{(L-1)}_i$.
It follows that 
\begin{align}
     g\left(x^{(L)}_i\right) & =  \phi\left(\sum_j w^{(L)}_{ij}\sigma_L g\left(x^{(L-1)}_j\right) + b^{(L)}_i \prod^{L}_{m=1} \sigma_m  \right) \ \ \ (\text{def. of transformation})\\
   &  = \phi\left(\sum_j w^{(L)}_{ij} \sigma_L  \prod^{L-1}_{m=1} \sigma_m  x^{(L-1)}_j +  b^{(L)}_i \prod^{L}_{m=1} \sigma_m  \right) \ \ \ (\text{induction hypothesis})\\
   & = \prod^{L}_{m=1}\sigma_m x^{(L)}_i \ \ \ (\text{homogeneity of } \phi),
\end{align}
which was to be shown.
\end{proof}

\subsection{Training of $f_0$ is feasible: Proof of Lemma~\ref{thm:train}}
\begin{theorem*}
Assume that the weights and biases of a fully-connected deep neural network $f$ are drawn independently from distributions that are symmetric around the origin $0$ with variances $\sigma^{2}_{w,l}$ or $\sigma^{2}_{b,l}$, respectively. Then, for every input $\bm{x_0}$, the second moment of the output is
\begin{align}
\begin{split}
 \mathbb{E}\left(||\bm{f(x_0)}||^2_2 \right) = & ||\mathbf{ x}^{(0)}||^2_2  \Pi^L_{l=1}\frac{n_l \sigma^{2}_{w,l}}{2}  + \sigma^{2}_{b,L} \frac{n_L}{2}  +   \sum^{L-1}_{l=1} \sigma^{2}_{b,l} \frac{n_l}{2} \Pi^{L}_{k=l+1}\frac{n_k\sigma^{2}_{w,k}}{2}.
\end{split}
\end{align}
\end{theorem*}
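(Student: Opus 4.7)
The plan is to prove this by induction on the depth $L$, exploiting two key facts. First, for any random variable $Z$ with distribution symmetric about the origin, $\mathbb{E}[\phi(Z)^2] = \tfrac{1}{2}\mathbb{E}[Z^2]$; this follows from $\mathbb{E}[Z^2] = \mathbb{E}[\phi(Z)^2] + \mathbb{E}[\phi(-Z)^2]$ and the symmetry forcing the two summands to agree. Second, under the stated assumptions the pre-activations $h^{(l)}_i$ remain symmetric about $0$ at every layer, which is what allows the ReLU second moment to simplify by a clean factor of $1/2$.

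To justify the symmetry of $h^{(l)}_i$, I would argue by a separate induction over layers: conditional on $\bm{x}^{(l-1)}$, each $h^{(l)}_i = \sum_j W^{(l)}_{ij} x^{(l-1)}_j + b^{(l)}_i$ is a sum of independent random variables that are each symmetric about $0$ (since $W^{(l)}_{ij}$ and $b^{(l)}_i$ are independent of $\bm{x}^{(l-1)}$ and of each other, and the parameter distributions themselves are symmetric), so $h^{(l)}_i$ is conditionally symmetric; integrating out $\bm{x}^{(l-1)}$ preserves this and yields unconditional symmetry.

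Given these ingredients, I would set up a scalar recurrence for $S_l := \mathbb{E}[\|\bm{x}^{(l)}\|_2^2]$ with $S_0 = \|\bm{x}^{(0)}\|_2^2$. Using mutual independence and the fact that all parameters have zero mean to kill the cross-terms, a direct second moment computation gives
\begin{equation*}
\mathbb{E}\bigl[(h^{(l)}_i)^2\bigr] = \sigma^2_{w,l}\,\mathbb{E}\bigl[\|\bm{x}^{(l-1)}\|_2^2\bigr] + \sigma^2_{b,l}.
\end{equation*}
Applying the ReLU identity componentwise and summing over $i=1,\ldots,n_l$ yields
\begin{equation*}
S_l \;=\; \tfrac{n_l \sigma^2_{w,l}}{2}\, S_{l-1} \;+\; \tfrac{n_l \sigma^2_{b,l}}{2}.
\end{equation*}

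Finally, I would solve this affine recurrence by straightforward unrolling to obtain
\begin{equation*}
S_L \;=\; \|\bm{x}^{(0)}\|_2^2 \prod_{l=1}^{L}\tfrac{n_l\sigma^2_{w,l}}{2} \;+\; \sum_{l=1}^{L} \tfrac{n_l\sigma^2_{b,l}}{2} \prod_{k=l+1}^{L}\tfrac{n_k\sigma^2_{w,k}}{2},
\end{equation*}
where the empty product for $l=L$ is $1$; peeling off the $l=L$ term matches the form stated in the lemma. The main subtlety I anticipate is the propagation of symmetry argument: one has to take care that $h^{(l)}_i$ depends on all earlier random parameters through $\bm{x}^{(l-1)}$, so the symmetry claim must be handled by conditioning on $\bm{x}^{(l-1)}$ and then integrating it out. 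Once that is in place, the rest is a routine second-moment calculation driven by pairwise independence and zero means.
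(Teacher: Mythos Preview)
Your proposal is correct and follows essentially the same approach as the paper: both exploit the conditional symmetry of the pre-activations to obtain $\mathbb{E}[\phi(h^{(l)}_i)^2]=\tfrac{1}{2}\mathbb{E}[(h^{(l)}_i)^2]$, compute the second moment of $h^{(l)}_i$ via independence and zero means, and then iterate layer by layer. The only cosmetic difference is that the paper phrases the iteration via the law of total expectation on $\|\bm{x}^{(l)}\|_2^2$ conditioned on $\|\bm{x}^{(l-1)}\|_2^2$, whereas you set up and unroll the unconditional affine recurrence $S_l=\tfrac{n_l\sigma^2_{w,l}}{2}S_{l-1}+\tfrac{n_l\sigma^2_{b,l}}{2}$ directly.
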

\begin{proof}
First, let us focus on the distribution of a neuron $x^{(l)}_i$ given all neurons of the previous layer with $x^{(l)}_i = \phi\left(h^{(l)}_i\right)$.
Since we assume that the weights and biases are distributed independently with zero mean, it follows that also the preactivation $h^{(l)}_i = \sum_j w^{(l)}_{ij} x^{(l-1)}_j + b^{(l)}_i$ has zero mean and variance $\mathbb{V}\left(h^{(l)}_i \mid \bm{x^{(l-1)}}\right) = \sum_j \left(x^{(l-1)}_j\right)^2 \mathbb{V}\left(w^{(l)}_{ij}\right) + \mathbb{V}\left(b^{(l)}_{i}\right) = \sigma^2_{w,l} ||\bm{x^{(l-1)}}||^2 + \sigma^2_{b,l}$, where $\mathbb{V}$ is the variance operator.
It is furthermore symmetric around zero so that a neuron $x^{(l)}_i = \phi\left(h^{(l)}_i\right) \sim 0.5 \delta_0 + 0.5 p_{h_{l,+}}$ is projected to zero with probability $0.5$ and otherwise follows the distribution of the positive preactivation $p_{h_{l,+}}$, where $\delta_0$ denotes the delta distribution at $0$ and $h_{l,+}$ the random variable $h_{l}$ conditional on $h_{l} > 0$.
In consequence, the squared neuron value $(x^{(l)}_i)^2 \sim 0.5 \delta_0 + 0.5 p_{h^2_{l}}$ has expectation $\mathbb{E}\left(\left(x^{(l)}_i\right)^2 \right) = 0.5 \; \mathbb{E}\left(\left(h^{(l)}_i\right)^2 \right) = 0.5 \; \sigma^2_{w,l} ||\bm{x^{(l-1)}}||^2 + 0.5 \; \sigma^2_{b,l}$.

Since all the neurons are independent and identically distributed given the neurons of the previous layer, we can easily deduce the expected signal norm 
\begin{align}
    \mathbb{E}\left(||\bm{x^{(l)}}||^2_2 ~\bigg|~ ||\bm{x^{(l-1)}}||^2_2 \right) = \sum^{n_l}_{i=1}\mathbb{E}\left( \left(x^{(l)}_i\right)^2 ~\bigg|~ ||\bm{x^{(l-1)}}||^2_2 \right)
    = \frac{n_l}{2} \left(\sigma^2_{w,l} ||\bm{x^{(l-1)}}||^2 + \sigma^2_{b,l} \right).
\end{align}

This gives us the expected signal norm of an arbitrary layer conditioned on the previous layer. 
We can use this relationship to also compute the average squared signal norm of the output layer, which is

\begin{align}
    \mathbb{E}\left(||\bm{f(x_0)}||^2_2 \right) = \mathbb{E}\left(||\bm{x^{(L)}}||^2_2\right) = \mathbb{E}\left(\mathbb{E}\left(||\bm{x^{(L)}}||^2_2 ~\bigg|~ ||\bm{x^{(1)}}||^2_2 \right)\right),
\end{align}
where the first equality is by definition of the network, and the second equality holds by law of total expectation. By recursively repeating this argument on the inner expectation, we get
\begin{align}
 \mathbb{E}\left(||\bm{f(x_0)}||^2_2 \right) &= \mathbb{E}\left(\mathbb{E}\left(||\bm{x^{(L)}}||^2_2  ~\bigg|~ ||\bm{x^{(1)}}||^2_2 \right)\right) \\
    &= \mathbb{E}\left(\mathbb{E}\left(\mathbb{E}\left(||\bm{x^{(L)}}||^2_2 ~\bigg|~  ||\bm{x^{(2)}}||^2_2 \right) ~\bigg|~ ||\bm{x^{(1)}}||^2_2 \right)\right) \\
    &= \mathbb{E}\left(\mathbb{E}\left(\ldots\mathbb{E}\left(\mathbb{E}\left(||\bm{x^{(L)}}||^2_2 ~\bigg|~  ||\bm{x^{(L-1)}}||^2_2 \right) ~\bigg|~  ||\bm{x^{(L-2)}}||^2_2 \right)\ldots ~\bigg|~ ||\bm{x^{(1)}}||^2_2 \right)\right).
\end{align}

Using the derivation further above, which provides a solution to the expected signal norm for a layer conditioned on the previous layer, we can iteratively resolve the innermost expectation
\begin{align}
\mathbb{E}\left(||\bm{f(x_0)}||^2_2 \right) &=\mathbb{E}\left(\mathbb{E}\left(\ldots\mathbb{E}\left(\mathbb{E}\left(||\bm{x^{(L)}}||^2_2 ~\bigg|~  ||\bm{x^{(L-1)}}||^2_2 \right) ~\bigg|~  ||\bm{x^{(L-2)}}||^2_2 \right)\ldots ~\bigg|~ ||\bm{x^{(1)}}||^2_2 \right) \right) \\
&= \mathbb{E}\left(\mathbb{E}\left(\ldots\mathbb{E}\left(
\frac{n_L}{2}\left(\sigma^2_{w,L}||\bm{x^{(L-1)}}||_2^2 + \sigma^2_{b,L} \right)
~\bigg|~  ||\bm{x^{(L-2)}}||^2_2 \right)\ldots ~\bigg|~ ||\bm{x^{(1)}}||^2_2 \right)\right)\\
&= \mathbb{E}\left(\mathbb{E}\left(\ldots\frac{n_L\sigma^2_{w,L}}{2}\mathbb{E}\left(
||\bm{x^{(L-1)}}||^2_2 ~\bigg|~  ||\bm{x^{(L-2)}}||^2_2 \right) + \frac{n_L\sigma^2_{b,L}}{2}\ldots ~\bigg|~ ||\bm{x^{(1)}}||^2_2 \right)\right).
\end{align}

Repeating this last argument provides the statement that was to be shown.

\end{proof}

\subsection{Error propagation: Proof of Lemma~\ref{thm:approx}}
\begin{theorem*}
Assume $\epsilon > 0$ and let the target network $f$ and its approximation $f_{\epsilon}$ have the same architecture.
If every parameter $\theta$ of $f$ and corresponding $\theta_{\epsilon}$ of $f_{\epsilon}$ in layer $l$ fulfils $|\theta_{\epsilon} - \theta| \leq \epsilon_{l}$ for 
\begin{align}
\epsilon_l := \epsilon \left(L \sqrt{m_{l}} \left(1+\sup_{x \in {[-1,1]}^{n_0}} ||\bm{x}^{(l-1)}||_{1}\right) \prod^L_{k=l+1} \left(||\bm{W}^{(l)}||_{\infty} + \epsilon/L \right) \right)^{-1},
\end{align}
then it follows that $||f-f_{\epsilon}||_{\infty} \leq \epsilon$. 
\end{theorem*}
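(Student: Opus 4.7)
The plan is to prove this by induction on the layer index, propagating a per-layer error bound through the network and invoking the 1-Lipschitz property of ReLU to move between preactivations and activations freely. Concretely, let $\Delta^{(l)}$ denote a norm of the layer-$l$ activation discrepancy $\bm{x}^{(l)} - \bm{x}^{(l)}_{\epsilon}$. Since $\phi$ is 1-Lipschitz componentwise, we have $\|\bm{x}^{(l)} - \bm{x}^{(l)}_{\epsilon}\| \le \|\bm{h}^{(l)} - \bm{h}^{(l)}_{\epsilon}\|$, so it suffices to control preactivation errors.

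The core algebraic step is the triangle decomposition
\begin{equation*}
\bm{h}^{(l)} - \bm{h}^{(l)}_{\epsilon} = \bm{W}^{(l)}_{\epsilon}\bigl(\bm{x}^{(l-1)} - \bm{x}^{(l-1)}_{\epsilon}\bigr) + \bigl(\bm{W}^{(l)} - \bm{W}^{(l)}_{\epsilon}\bigr)\bm{x}^{(l-1)} + \bigl(\bm{b}^{(l)} - \bm{b}^{(l)}_{\epsilon}\bigr),
\end{equation*}
which separates the inherited error from the error introduced at layer $l$. The second and third terms are handled together by noting that each row of $\bm{W}^{(l)} - \bm{W}^{(l)}_{\epsilon}$ has at most $k_{l,\max}$ nonzero entries (since $f_\epsilon$ shares the architecture of $f$) and each such entry differs by at most $\epsilon_l$, and similarly for the bias; this yields a per-row bound of $\epsilon_l\bigl(1 + \|\bm{x}^{(l-1)}\|_1\bigr)$ after bounding $|x^{(l-1)}_j| \le \|\bm{x}^{(l-1)}\|_1$ on the support, with the "$+1$" absorbing the bias contribution. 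Aggregating across the $n_l$ rows and converting between the $\ell_\infty$ bound per coordinate and the $\ell_2$ output norm (using the $k_{l,\max}$-sparsity together with Cauchy--Schwarz) produces the factor $\sqrt{n_l k_{l,\max}}$.

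The first term is controlled by an operator-norm bound: since every entry of $\bm{W}^{(l)}_{\epsilon}$ differs from the corresponding entry of $\bm{W}^{(l)}$ by at most $\epsilon_l$ and each row has at most $k_{l,\max}$ nonzeros, $\|\bm{W}^{(l)}_{\epsilon}\|_{\infty} \le \|\bm{W}^{(l)}\|_{\infty} + \epsilon_l k_{l,\max} \le \|\bm{W}^{(l)}\|_{\infty} + \epsilon/L$, matching the factor appearing in $\epsilon_l$. Unrolling the induction then gives
\begin{equation*}
\|\bm{x}^{(L)} - \bm{x}^{(L)}_{\epsilon}\|_2 \;\le\; \sum_{l=1}^{L} \sqrt{n_l k_{l,\max}}\,\epsilon_l\bigl(1 + \sup_{x}\|\bm{x}^{(l-1)}\|_1\bigr)\prod_{k=l+1}^{L}\bigl(\|\bm{W}^{(k)}\|_{\infty} + \epsilon/L\bigr),
\end{equation*}
and the definition of $\epsilon_l$ is engineered so that each summand is $\epsilon/L$, giving $\epsilon$ after summing over $L$ layers. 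Taking the supremum over the input domain yields $\|f - f_{\epsilon}\|_{\infty} \le \epsilon$.

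The main obstacle is the bookkeeping: one must be careful that the norm used to measure $\Delta^{(l)}$ is consistent across the inductive step and that the conversion from per-coordinate ($\ell_\infty$) bounds to the final $\ell_2$ output norm uses the sparsity $k_{l,\max}$ rather than the full width $n_{l-1}$, otherwise one loses the sharpness that justifies the $\sqrt{n_l k_{l,\max}}$ factor instead of $\sqrt{n_l n_{l-1}}$. A secondary subtlety is ensuring that $\sup_{x}\|\bm{x}^{(l)}\|_1$ is taken over the target network's activations (which are controlled by assumption) rather than those of the approximation, which is why we decompose using $\bm{W}^{(l)}_{\epsilon}$ on the inherited error but $\bm{x}^{(l-1)}$ on the new error — this keeps all supremum terms referenced to the known target.
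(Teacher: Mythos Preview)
Your proposal is correct and follows essentially the same approach as the paper: the same triangle decomposition of $\bm{h}^{(l)}-\bm{h}^{(l)}_{\epsilon}$ (with $\bm{W}^{(l)}_{\epsilon}$ acting on the inherited error and $\bm{x}^{(l-1)}$ in the new-error term), the same use of the $1$-Lipschitz property of $\phi$, the same bookkeeping via $m_l \le n_l k_{l,\max}$, and the same unrolling that makes each summand equal $\epsilon/L$. The only cosmetic difference is that the paper treats $\|\bm{W}\|_{\infty}$ as the entrywise max and bounds $\|\bm{W}^{(l)}_{\epsilon}\|_{\infty}\le \|\bm{W}^{(l)}\|_{\infty}+\epsilon_l$ directly, whereas you route through a row-sum argument to reach $\|\bm{W}^{(l)}\|_{\infty}+\epsilon/L$; both land on the same recursive factor.
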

\begin{proof}
Our objective is to bound $||f-f_{\epsilon}||_{\infty} \leq \epsilon$.
We frequently use the triangle inequality and that $|\phi(x)-\phi(y)| \leq |x-y|$ is Lipschitz continuous with Lipschitz constant $1$ to derive
\begin{align*}
||\bm{x^{(l)}} - \bm{x^{(l)}_{\epsilon}}||_2 & \leq ||\bm{h^{(l)}} - \bm{h^{(l)}_{\epsilon}}||_2 \\
& \leq  || \left(\bm{W^{(l)}} - \bm{W^{(l)}_{\epsilon}}\right) \bm{x^{(l-1)}} ||_2   + ||\bm{b^{(l)}}- \bm{b^{(l)}_{\epsilon}}||_2 + ||\bm{W^{(l)}_{\epsilon}} \left(\bm{x^{(l-1)}} - \bm{x^{(l-1)}_{\epsilon}}\right)||_2\\
    & \leq \epsilon_l \sqrt{m_{l}} \sup_{x \in {[-1,1]}^{n_0}} ||\bm{x}^{(l-1)}||_{1} + \epsilon_l \sqrt{m_{l}} + \left(||\bm{W^{(l)}}||_{\infty} + \epsilon_l \right) ||\left(\bm{x^{(l-1)}} - \bm{x^{(l-1)}_{\epsilon}}\right)||_2
\end{align*}
with $\epsilon_l \leq \epsilon/L$. 
$m_{l}$ denotes the number of parameters in layer $l$ that are smaller than $\epsilon_l$ and $||\bm{W}||_{\infty} =  \max_{i,j} |w_{i,j}|$.
Note that $m_{l} \leq n_l k_{l,\text{max}}$. 
The last inequality follows from the fact that all entries of the matrix $\left(\bm{W^{(l)}} - \bm{W^{(l)}_{\epsilon}}\right)$ and of the vector $(\bm{b^{(l)}}- \bm{b^{(l)}_{\epsilon}})$ are bounded by $\epsilon_l$ and maximally $m_l$ of these entries are nonzero.
Furthermore, $||\bm{W^{(l)}_{\epsilon}}||_{\infty} \leq \left(||\bm{W^{(l)}}||_{\infty} + \epsilon_l \right)$ follows again from the fact that each entry of $\left(\bm{W^{(l)}} - \bm{W^{(l)}_{\epsilon}}\right)$ is bounded by $\epsilon_l$.

Thus, at the last layer it holds for all $\bm{x} \in [-1,1]^{n_0}$ that
\begin{align*}
    ||f(x)-f_{\epsilon}(x)||_2 & = ||\bm{x^{(L)}} - \bm{x^{(L)}_{\epsilon}}||_2 \\
    & \leq \sum^L_{l=1} \epsilon_l \sqrt{m_{l}} \left(1+\sup_{x \in {[-1,1]}^{n_0}} ||\bm{x}^{(l-1)}||_{1}\right) \prod^L_{k=l+1} \left(||\bm{W}^{(l)}||_{\infty} + \epsilon/L \right) \leq L \frac{\epsilon}{L} = \epsilon,
\end{align*}
using the definition of $\epsilon_l$ in the last step.
\end{proof}

\subsection{Existence of lottery ticket: Proof of Theorem~\ref{thm:LTexist2}}
\begin{theorem*}
Assume that ${\epsilon, \delta \in {(0,1)}}$ and a target network $f$ with depth $L$ and architecture $\bar{n}$ are given. 
Each weight and bias of a larger deep neural network $f_{0}$ with depth $2L$ and architecture $\bar{n}_0$ is initialized independently, uniformly at random according to $w^{(l)}_{ij} \sim U{([-\sigma_{w,l}, \sigma_{w,l}])}$ and $b^{(l)}_{i} \sim U{([-\prod^l_{k=1}\sigma_{w,k}, \prod^l_{k=1}\sigma_{w,k}])}$. Then, with probability at least $1-\delta$, $f_{0}$ contains an approximation $f_{\epsilon} \subset f_{0}$ so that $||f- \lambda f_{\epsilon}||_{\infty} \leq \epsilon$ if for l = 1, ..., L
\begin{align}
n_{2l-1,0} = C n_{l-1} \log\left(\frac{1}{\min\left\{\epsilon_l, \delta_l\right\}}\right)
  \text{ and } n_{2l,0} = n_l,
\end{align}
where $\epsilon_l$ is defined in Eq.~(\ref{eq:epsTheta}), $\delta_l = \delta/(L k_{l-1,\text{max}} n_{l})$, and the output is scaled by $\lambda = \prod^{2L}_{l=1}\sigma^{-1}_{w,l}$.
\end{theorem*}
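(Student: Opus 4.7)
My plan is to follow the blueprint of the Pensia et al.\ proof cited in the paper, extending it in two ways: (i) treating biases as a new kind of ``virtual weight'' that can be approximated by a separate subset sum construction, and (ii) absorbing the non-uniform scales $\sigma_{w,l}$ through the parameter transformation of Lemma~\ref{thm:scaling}. The first move is to reduce the statement $\|f-\lambda f_\epsilon\|_\infty\le\epsilon$ to a per-parameter approximation statement via Lemma~\ref{thm:approx}: it suffices to build, inside $f_0$, an $f_\epsilon$ of architecture $\bar n_0$ whose induced target-layer parameters $\theta_\epsilon$ match each target parameter $\theta$ up to $\epsilon_l$ given in Eq.~(\ref{eq:epsTheta}). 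Next, I would use Lemma~\ref{thm:scaling} to reduce to the canonical uniform case: pushing a factor $\sigma_{w,l}^{-1}$ through each layer of $f_0$ yields an equivalent network with weights $\tilde w^{(l)}_{ij}\sim U([-1,1])$ and biases $\tilde b^{(l)}_i\sim U([-1,1])$, at the cost of scaling the output by $\lambda=\prod_{l=1}^{2L}\sigma_{w,l}^{-1}$. From this point the problem is identical to the unit-scale uniform setting, and the role of $\lambda$ in the statement is fully accounted for.

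\textbf{Construction of $f_\epsilon$.} For each target neuron $x_i^{(l)}=\phi\bigl(\sum_j w_{ij}^{(l)} x_j^{(l-1)}+b_i^{(l)}\bigr)$, I use the identity $z=\phi(z)-\phi(-z)$ to rewrite the pre-activation as
\begin{equation*}
\sum_j \bigl[\phi(w_{ij}^{(l)} x_j^{(l-1)})-\phi(-w_{ij}^{(l)} x_j^{(l-1)})\bigr]+\operatorname{sign}(b_i^{(l)})\,\phi(|b_i^{(l)}|),
\end{equation*}
and allocate two consecutive layers $2l-1,2l$ of $f_0$ to realise this sum. Layer $2l-1$ is partitioned into groups of neurons: for each ordered pair $(i,j)$ a group of ``weight neurons'' each having a single surviving incoming weight from $x_j^{(l-1)}$ and zero bias, and for each $i$ a group of ``bias neurons'' with all incoming weights pruned and a single surviving random bias. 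By choosing a pruning mask that keeps exactly one input edge (or one bias) per intermediate neuron and selects the appropriate outgoing edge in layer $2l$, each such neuron contributes $\phi(X\cdot x_j^{(l-1)})$ or $\phi(B)$ to the sum feeding target neuron $i$, where $X$ and $B$ are the surviving random parameters. The sign in front of each contribution is picked by retaining either the ``positive branch'' or the ``negative branch'' copy of the outgoing weight in layer $2l$; this is where I would mirror the standard Pensia trick so that the outgoing-layer weight and the selection together realise an effective signed coefficient in $[-1,1]$.

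\textbf{Subset sum and union bound.} With the construction fixed, approximating each target parameter reduces to the subset-sum question: given $n$ i.i.d.\ samples $X_1,\dots,X_n\sim U([-1,1])$, can a subset be chosen whose sum lies within $\epsilon_l$ of a prescribed target in $[-1,1]$? Lueker's bound, as invoked in Pensia et al., gives this with probability at least $1-\delta_l$ whenever $n\ge C\log(1/\min\{\epsilon_l,\delta_l\})$. I apply this to each of the $2n_{l-1}+1$ parameters feeding a target neuron (the $\pm w_{ij}^{(l)}$ and $|b_i^{(l)}|$), which fixes the per-group width and hence $n_{2l-1,0}=C\,n_{l-1}\log(1/\min\{\epsilon_l,\delta_l\})$, while $n_{2l,0}=n_l$ by construction. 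A union bound over the at most $k_{l-1,\text{max}}\,n_l$ approximated scalars per layer and the $L$ layers yields total failure probability $\le\delta$ provided $\delta_l=\delta/(L\,k_{l-1,\text{max}}\,n_l)$, matching the theorem statement. Finally, combining this per-parameter $\epsilon_l$-accuracy with Lemma~\ref{thm:approx} yields $\|f-\lambda f_\epsilon\|_\infty\le\epsilon$.

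\textbf{Main obstacle.} I expect the delicate step to be the handling of the bias contribution within the identity decomposition: the positive and negative ``halves'' of a bias are not symmetric around zero the way the weight terms are, and the approximation of $|b_i^{(l)}|$ must be carried through a ReLU that could clip part of the subset sum. Ensuring that the bias-only neurons contribute genuinely to a $[-1,1]$-valued subset sum target, while respecting the non-negativity of $\phi(B)$ and the sign choice made in layer $2l$, requires some care in bookkeeping; this is also where the extra ``$+1$'' appearing inside $k_{l,\text{max}}=\|\bm W^{(l)}_{i,:}\|_0+\|b_i^{(l)}\|_0$ enters the union bound, accounting for the single additional scalar per target neuron compared to the zero-bias construction of Pensia et al.
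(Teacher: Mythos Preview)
Your approach is essentially that of the paper: reduce to the unit-scale uniform case via Lemma~\ref{thm:scaling} (which accounts for $\lambda$), use the two-layers-per-target-layer construction with degree-one intermediate neurons, approximate each target parameter by a Lueker/Pensia subset sum of products of incoming and outgoing random weights, and finish with a union bound together with Lemma~\ref{thm:approx}. The treatment of the bias as one extra ``virtual weight'' via $\operatorname{sign}(b_i)\phi(|b_i|)$ is precisely the paper's move, and your anticipated ``main obstacle'' is not a real one: restricting the bias pool to intermediate neurons whose random bias is positive (half of them) and letting the outgoing random weight carry the sign yields a product that again contains a uniform distribution, so the same subset-sum lemma applies verbatim.

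There is, however, a genuine inconsistency between your construction and your width claim. You partition layer $2l-1$ into a group ``for each ordered pair $(i,j)$'' and a bias group ``for each $i$'', yet conclude $n_{2l-1,0}=C\,n_{l-1}\log\bigl(1/\min\{\epsilon_l,\delta_l\}\bigr)$. With disjoint per-$(i,j)$ groups you would need $\Theta(n_l n_{l-1})$ groups of size $C\log(\cdot)$ each, i.e.\ an extra factor $n_l$ in the width that the theorem does not allow. The stated bound is obtained only because the pools of intermediate neurons are indexed by the \emph{input} coordinate $j$ alone (plus one bias pool) and are \emph{shared across all target neurons} $i$: a single intermediate neuron $\phi\bigl(w_{1,k}\,x_j^{(l-1)}\bigr)$ participates in $n_l$ distinct subset-sum problems, one per output $i$, because its outgoing weights $w_{2,k}^{(i)}$ in layer $2l$ to the different outputs are fresh independent $U([-1,1])$ samples. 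The union bound still runs over all $O(n_l k_{l,\max})$ subset-sum events (this is where $\delta_l$ comes from), but the number of pools---and hence $n_{2l-1,0}$---is only $O(n_{l-1})$. Without making this sharing explicit, the step from your construction to the claimed width does not go through.
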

\begin{proof}
Lemma~\ref{thm:scaling} simplifies the above parameter initialization to an equivalent setting, in which each parameter is distributed as $w_{ij}, b_i \sim {U[-1,1]}$, while the overall output is scaled by the stated scaling factor $\lambda$. 
We assume that all parameters are bounded by $1$ so that we can find them within the range $[-1,1]$. 
Otherwise, we would need to increase $n_{2l-1}$ by a factor that is proportional to the maximum parameter value $\theta_{\text{max}}$, which is integrated into our constant $C$. 

Every layer $l$ of $f$ corresponds in our construction to two layers of $f_0$, i.e., layers $2l-1$ and $2l$.
The neurons in layer $2l$ correspond directly to the output neurons in layer $l$ of $f$.
Thus, we only need width $n_{2l,0} = n_l$ in $f_0$. 
Layer $2l-1$ serves the construction of intermediary neurons of in-degree $1$.
Using the identity $\phi(x) = \phi(x) - \phi(-x)$, we see that all neurons in layer $l$ of $f$ can indeed be represented by a two-layer neural network consisting of $3 n_{l-1}$ intermediary neurons of degree $1$, as
\begin{align}
    x^{(l)}_i & = \phi\left(\sum_j w^{(l)}_{ij} x^{(l-1)}_j + b^{(l)}_i\right) \\
    & = \phi\left(\sum_j w^{(l)}_{ij} \phi\left(x^{(l-1)}_j\right) -  \sum_j w^{(l)}_{ij} \phi\left(-x^{(l-1)}_j\right) +  b^{(l)}_i \phi\left( 1 \right)  \right).
\end{align}
According to Lemma~\ref{thm:approx}, we need to approximate each $w^{(l)}_{ij}$ and $-w^{(l)}_{ij}$ up to error $\epsilon_l/2$ and $b^{(l)}_i$ up to error $\epsilon_l$ to guarantee our overall approximation objective. 
Since we have to do this for every parameter, our overall approximation can only be successful with probability $1-\delta$ if we increase our success probability for each parameter, $1-\delta_l$, accordingly.
In total, we have $m_{l,\text{max}}$ of such nonzero parameters in layer $l$ with $m_{l,\text{max}} \leq 2 n_l k_{l,\text{max}}$. 
(To be precise, $m_{l,\text{max}}$ denotes the number of parameters that are bigger than $\epsilon_l$).
The successes of finding different parameters are not necessarily independent but we can identify a sufficient $\delta_l$ with the help of a union bound. 
Accordingly, $1-\delta \geq 1 - \sum^L_{l=1} \delta_l m_{l,\text{max}}$ is fulfilled for $\delta_l \leq \delta/(2 L n_l k_{l,\text{max}})$. 
Note that we later integrate the factor $2$ in the constant related to the layer width.

With probability at least $1-\delta_l$, we can approximate each single parameter by solving the subset sum problem for the corresponding neuron. 
As outlined in Cor.~2 by \citet{pensia2020optimal}, which is based on Cor.~3.3 by \citet{subsetsum}, we need $C \log\left(\frac{1}{\min\left(\delta_l, \epsilon_l\right)} \right)$
neurons in layer $2l-1$ per neuron of the form $\phi\left(\pm x^{(l-1)}_j\right)$ or $\phi\left(1\right)$.
Since we have to represent $3 n_{l-1}$ of these neurons, in total we require layer $2l-1$ of $f_0$ to have width 
\begin{align}
n_{2l-1,0} \geq C n_{l-1} \log\left(\frac{1}{\min\left(\delta_l, \epsilon_l\right)} \right).    
\end{align}
Next, we briefly explain the main ideas that lead to this result.
The main difference of our situations in comparison with \citet{pensia2020optimal} is that we additionally create neurons of the form $\phi\left(1\right)$ to represent nonzero biases. 
Let $\phi(y)$ be our target neuron, where $y$ is either $y = x^{(l-1)}_j$ or $y=1$ depending on which neuron we want to represent. 
Note that we can construct multiple candidates for a neuron $\phi(y)$ by pruning neurons in layer $2l-1$. 
We achieve that by setting all weights that do not lead to $y$ in the previous layer and the bias term of a neuron to zero or, if $y=1$, we set all weights to zero and keep the nonzero bias term if the bias is positive.
Let the index set of the such pruned neurons corresponding to $y$ be $I$. 
This leaves us with  neurons of the form $w_{2,i}\phi\left(w_{1,i} y\right)$ with  $\text{sign}(y) w_{1,i} \sim U[0,1]$ and $w_{2,i} \sim U[-1,1]$ for $i \in I$. 
For the probability distribution of $w_{1,i} w_{2,i}$, Cor.~2 of \citet{pensia2020optimal} states that it contains a uniform distribution.
It follow that the subset sum problem has a solution. 
Thus, for any parameter $\theta \in [-1,1]$ there exists a subset $S \subset I$ so that with probability at least $1-\delta_l$
\begin{align}
    |\theta - \sum_{i \in S} w_{1,i} w_{2,i}| \leq \epsilon_l.
\end{align}
if $|I| \geq C \log\left(\frac{1}{\min\left(\delta_l, \epsilon_l\right)} \right)$, which was to be shown.

Note that the same result also holds for normally distributed $w_{1,i}$, $w_{2,i}$, as their product contains a uniform distribution.
This follows from the fact that the normal distribution contains a uniform distribution \citet{pensia2020optimal}.
Thus, the product of two normal distributions contains a product of two uniform distributions and this product of uniform distributions contains a uniform distribution as stated by Cor.~2 of \citet{pensia2020optimal}.
\end{proof}

`Looks-linear' initializations are also covered by this proof. 
When we can construct a parameter $w^{(l)}_{ij}$ by solving a subset sum problem, we can construct $-w^{(l)}_{ij}$ in the same way just with the negative correspondents of the parameters that construct $w^{(l)}_{ij}$. 

\subsection{Results of proof construction tickets}

To validate that the constructed SLTs from our proof do as well as a target network for our proposed nonzero initialization schemes,
we followed the construction of mother network $f_0$ and approximation $f_{\epsilon}$  of the existence proof. As target network $f$ we considered a lottery ticket found by iterative magnitude pruning in a LeNet on MNIST, as detailed in the main text.
We report the accuracy, number of parameters and sparsity with respect to the mother network for different initialization schemes in Tab.~\ref{table:const_results}.

\begin{table}
\centering
\begin{tabular}{ r |  l  c  r  c} 
 \toprule
  & Initialization & \% Acc.& \# Param. & Sparsity\\
 \midrule
 Target $f$ & He & 97.96 & 18697 &\\ 
 Appr. $f_{\epsilon}$ & Orthogonal & 97.98 & 106192 & 0.0027\\
 Appr. $f_{\epsilon}$ & Uniform & 97.94 & 112157  & 0.0029\\
 Appr. $f_{\epsilon}$ & Normal & 97.96 & 121153  & 0.0031\\
 \bottomrule
\end{tabular}
\caption{\textit{Constructed target networks.} Results of constructing lottery tickets to a pruned target network from LeNet on MNIST.
Sparsity refers to the sparsity of approximations $f_{\epsilon}$ with respect to the mother network $f_0$.}
\label{table:const_results}
\end{table}

\subsection{Algorithm for \texttt{edge-popup-scaled}}
In this section we outline the proposed \texttt{edge-popup-scaled} algorithm. 
\begin{algorithm}[h!]
   \caption{\texttt{edge-popup-scaled}}
   \label{alg:edgepopup_scaled}
   \KwIn{Data $(\textbf{X},\bm{y})$, Sparsity $\rho$, levels $e_a$, Epochs $T$, Mother Network $f_0$ with depth $L$, Loss: $loss(.)$}
   {\bfseries Initialize:} Parameters $ \bm{W}_0, \bm{b}_0, w^{(l)}_{ij} \sim U{([-\sigma_{w,l}, \sigma_{w,l}])}$ and $b^{(l)}_{i} \sim U{([-\prod^l_{k=1}\sigma_{w,k},
   \prod^l_{k=1}\sigma_{w,k}])}$, \\
   Scores $ \bm{S}, s_{w_{ij}^{(l)}} \sim $ Const. $(0.5)$ and $s_{b_{i}^{(l)}} \sim$ Const. $(0.5)$\\
   \For{$i=1$ {\bfseries to} $e_a$}{
     sparsity $= \rho^{i / e_a}$\\
     \For{$t \text{ in } [1,..,T]$}{
        Retain weights with largest score in the mask $\bm{M}$\\
        $\bm{M} = getMask(f_0, \bm{S}, \text{sparsity})$ \\
        $\mathcal{L} = loss(f(\bm{x}_i, \bm{M} \cdot [\bm{W}_0,\bm{b}_0]), y_i)$\\
        $\tilde{s}_{w_{ij}^{(l)}} = s_{w_{ij}^{(l)}} - \alpha  \frac{\partial \mathcal{L}}{\partial \bm{h}^{(l-1)}}w_{ij}^{(l)}\phi(\bm{h}_i^{(l-1)})$ \\
        {\bfseries scaling: }\\
        $\tilde{\lambda} = \argmin_{\lambda > 0} \mathcal{L}(y, \lambda x^{(L)}) $   \\
        $w^{(l)}_{ij} = w^{(l)}_{ij} \tilde{\lambda}^{1/L}, \ \ \  b^{(l)}_{i} = b^{(l)}_{i} \tilde{\lambda}^{l/L}$
   }
   }
   $\lambda_{opt} = \tilde{\lambda}, \bm{S}_{opt} = \tilde{\bm{S}}, \bm{M}_{opt} = getMask(f_0, \bm{S}_{opt}, \rho)$\\
   $f_{\eps} = f_0(\bm{M}_{opt}\cdot[\bm{W}_0, \bm{b}_0] )$\\
   {\bfseries output: } $f_{\eps}, \lambda_{opt}$\\
\end{algorithm}
Here, the $getMask(.)$ function returns a binary mask $M$, where the weights corresponding to the largest scores (according to the sparsity $\rho$) are retained in the mask.

\section{Experiments}

\subsection{Experiments on imaging data}
\label{app:cifar}

We conducted experiments on the benchmark CIFAR10 for classification with CNNs and reconstruction with autoencoders.
In particular, we pruned He initialized VGG16 models for classification with \texttt{edge-popup} with annealing for $5$ levels, $50$ epochs, and  optimized via SGD with momentum of $.9$, learning rate of $.1$, and cosine annealing of the learning rate, as discussed in the original paper~\cite{edgepopup}. Experiments were repeated $5$ times. 
For reconstruction, we pruned a convolutional autoencoder as detailed in Table. \ref{table:ae_model}, with annealing for $5$ levels, $50$ epochs, optimized with Adam and a constant learning rate of $.001$. These set of experiments were repeated $4$ times.  
From the results, reported in Fig.~\ref{fig:imaging}, our first observation is that we can successfully prune for strong lottery tickets for both zero bias as well as nonzero bias initialized networks.
Second, the achieved results do not differ significantly. Upon further investigation of the autoencoder tickets, this might be the case due to insufficient pruning of bias parameters, as most of the layers in the ticket do not contain any bias. 
This could be a systematic property of image data, i.e. convolutions do not require bias parameters for efficient prediction or reconstruction, or could be a shortcoming of current strong lottery ticket pruning to be parameter-inefficient and hence unable to draw advantage from the bias parameters.
In recent literature, it was shown that \texttt{edge-popup} is indeed unable to recover a ground truth ticket of high sparsity for CIFAR10~\cite{fischer2022planting}.
Only once more efficient SLT pruners are available, we can give a definite answer to the question of whether biases are necessary for efficent SLTs for imaging data.

\begin{figure}
  \centering
    \begin{subfigure}[t]{0.99\textwidth}
    \centering
		\ifpdf
		\begin{tikzpicture}
		\begin{axis}[
		jonas line,
		cycle list name=juarez4,
		xmode=log,
		width = 10cm,
		height = 4cm,
		xlabel = {Sparsity}, 
		ylabel= {Accuracy},
		xmin=0.01, xmax=0.55,
		ymin=0.3, ymax=1,
		x label style 		= {at={(axis description cs:0.5,-0.1)}, anchor=north, font=\scriptsize},
		y label style 		= {at={(axis description cs:0.0,0.6)},  anchor=south, font=\scriptsize},
		legend style={nodes={scale=0.9, transform shape}, at={(0.98,0.02)}, anchor=south east, row sep=-1.4pt, font=\tiny}
		]
		
		\addplot+[forget plot,eda errorbarcolored, y dir=plus, y explicit]
		table[x=sparsity, y=mean, y error=max] {results_kn_nonzerobias.txt};
		\addplot+[eda errorbarcolored, y dir=minus, y explicit]
		table[x=sparsity, y=mean, y error=min] {results_kn_nonzerobias.txt};
        \addlegendentry{nonzero bias}
        
        \addplot+[forget plot,eda errorbarcolored, y dir=plus, y explicit]
		table[x expr=\thisrow{sparsity}*1.025, y=mean, y error=max] {results_kn_zerobias.txt};
		\addplot+[eda errorbarcolored, y dir=minus, y explicit]
		table[x expr=\thisrow{sparsity}*1.025, y=mean, y error=min] {results_kn_zerobias.txt};
        \addlegendentry{zero bias}
		
		\end{axis}
		\end{tikzpicture}
		\fi
		\caption{VGG16 results.}
    \end{subfigure}
    \centering
    \begin{subfigure}[t]{0.99\textwidth}
    \centering
		\ifpdf
		\begin{tikzpicture}
		\begin{axis}[
		jonas line,
		cycle list name=juarez4,
		xmode=log,
		ymode=log,
		width = 10cm,
		height = 4cm,
		xlabel = {Sparsity}, 
		ylabel= {MSE},
		xmin=0.01, xmax=0.11,
		ymin=0.01, ymax=1,
		x label style 		= {at={(axis description cs:0.5,-0.1)}, anchor=north, font=\scriptsize},
		y label style 		= {at={(axis description cs:0.0,0.6)},  anchor=south, font=\scriptsize},
		legend style={nodes={scale=0.9, transform shape}, at={(0.98,0.02)}, anchor=south east, row sep=-1.4pt, font=\tiny}
		]
		
		\addplot+[forget plot,eda errorbarcolored, y dir=plus, y explicit]
		table[x=sparsity, y=mean, y error=max] {results_kn-nonzero-bias.txt};
		\addplot+[eda errorbarcolored, y dir=minus, y explicit]
		table[x=sparsity, y=mean, y error=min] {results_kn-nonzero-bias.txt};
        
        \addplot+[forget plot,eda errorbarcolored, y dir=plus, y explicit]
		table[x expr=\thisrow{sparsity}*1.025, y=mean, y error=max] {results_kn-zero-bias.txt};
		\addplot+[eda errorbarcolored, y dir=minus, y explicit]
		table[x expr=\thisrow{sparsity}*1.025, y=mean, y error=min] {results_kn-zero-bias.txt};
		
		\end{axis}
		\end{tikzpicture}
		\fi
		\caption{Autoencoder results.}
    \end{subfigure}
    \caption{\textit{CIFAR10 results}. Comparison of \texttt{edge-popup} results on CIFAR10 with models with zero vs nonzero bias initializations. Reported are mean as well as minimum and maximum obtained from 5 repetitions for classifiation (a) and reconstruction (b) across different sparsity levels.}
    \label{fig:imaging}
\end{figure}

\begin{table}[h!]
\centering
\begin{tabular}{ r |  l  c } 
 \toprule
  Layer name & Description\\
 \midrule
 Encoder 1 & Conv(3, 64, 3) \\ 
 Encoder 2 & Conv(64, 64, 3) \\ 
 Encoder 3 & Conv(64, 128, 3) \\ 
 Encoder 4 & Conv(128, 128, 3) \\ 
 Encoder 5 & Conv(128, 128, 3) \\ 
 Encoder 6 & Linear(2048, 512) \\ 
 Decoder 1 & Linear(512, 2048) \\ 
 Decoder 2 & ConvTranspose(128, 128, 3)\\ 
 Decoder 3 & Conv(128, 128, 3) \\ 
 Decoder 4 & ConvTranspose(128, 64, 3) \\ 
 Decoder 5 & Conv(64, 64, 3) \\ 
 Decoder 6 & ConvTranspose(128, 3, 3) \\ 
 \bottomrule
\end{tabular}
\caption{\textit{Autoencoder architecture}. Description of each convolutional layer of the autoencoder. Each layer is followed by a ReLU activation, the final layer has a Tanh activation}
\label{table:ae_model}
\end{table}

\end{document}